\documentclass[12pt]{article}

\usepackage{arxiv}

\graphicspath{{./}{../}{preprint/}}

\title{Reliability and Effectiveness of Autonomous AI Agents in Supply Chain Management}
\date{}
\author{
Carol Xuan Long\\
Harvard University\\
\And
David Simchi-Levi\\
MIT/Purdue\\
\And
Feng Zhu\\
MIT\\
\AND
Huangyuan Su\\
Harvard University/Kempner Institute\\
\And 
Andre P. Calmon \\
Georgia Tech
\And
Flavio P. Calmon\\
Harvard University\\
}

\begin{document}

\maketitle


\begin{abstract} 
This paper studies autonomous generative AI agents in multi-echelon supply chains using the MIT Beer Game. We identify four inference-time levers that shape performance: model selection, policies and guardrails, centralized data sharing, and prompt engineering. Model capability is the dominant factor: an out-of-the-box reasoning model exceeds human-level performance, and optimized reasoning models reduce costs by up to 67\% relative to human teams.
However, strong average performance masks substantial reliability risks. We introduce \textbf{agent bullwhip}: the amplification of run-to-run decision instability in autonomous multi-echelon systems. A central component is \textbf{decision bullwhip}, the portion of order variability generated by stochastic agent decisions rather than by changes in customer demand. We show that decision instability can amplify both across facilities at a fixed point in time and within the same facility over time, even when the demand path is held fixed. Repeated sampling, a natural test-time remedy, fails to meaningfully reduce this instability, suggesting that reliability requires changing the underlying decision policy rather than merely averaging over model outputs. To address this limitation, we propose a Group Relative Policy Optimization (GRPO)-based reinforcement-learning post-training framework that trains a shared base LLM using system-level supply-chain rewards. Post-training substantially reduces tail events, curtails agent bullwhip, and improves the reliability of autonomous supply-chain agents.

\end{abstract}

\section[Introduction]{Introduction\protect\footnotemark}
\footnotetext{The paper expands and provides more technical details on the concepts and framework described in a recent article, Long, C., Simchi-Levi, D., Calmon, A. P., \& Calmon, F. P. When supply chains become autonomous. Harvard Business Review \citep{long2025supply}.}\label{sec:Intro}

Experts suggest that a fully autonomous supply chain, where AI makes all inventory and logistics decisions, may be close at hand. They predict that autonomous supply chains will soon deliver significant gains in productivity, efficiency, and responsiveness. Fueling this excitement is the rapid progress in Large Language Models (LLMs), the engine behind Generative AI (GenAI), which can now handle tasks ranging from procurement decisions and revenue management to logistics.
So far, however, most LLM applications in supply chains have focused on narrow tasks, using a single model to enhance a specific function, such as demand forecasting or replenishment decisions \citep{menache2025generative}. The broader vision is more ambitious: a fully autonomous supply chain in which multiple LLM agents collaborate, each executing distinct responsibilities across the network.
But how close are we to this reality? To find out, we reimagined the Beer Game --- a classic supply chain management simulation developed in the 1960s and used by countless management education programs --- by replacing every human player with a GenAI agent. This setup allows us to analyze the supply-chain management capabilities of GenAI agents and the impact of lead time, information sharing, and financial constraints on agent performance. Our self-contained supply-chain simulation serves as a testbed for benchmarking GenAI agents against human experts, anticipating integration challenges, and identifying strategies for using this technology effectively in supply-chain management.

We show that making an autonomous, GenAI-managed supply chain work depends on mastering four critical levers: the model you choose, the policies and guardrails you set, the information you share, and the instructions you give. We benchmarked AI performance against that of humans: we used data from 12 Georgia Tech cohorts with more than 100 students in total who played the Beer Game over the past three years, all operating under the same system conditions as the GenAI testbed. In our best-performing setup --- using Llama 4 Maverick 17B with optimized prompts, data-sharing rules, and guardrails --- the AI agents reduced average costs across 30 replications of the game by as much as 67\% relative to the student teams.

Of course, average performance alone is insufficient to justify operational deployment in real-world applications. Supply chain practitioners must evaluate not only expected cost outcomes but also system reliability. An autonomous policy that yields low average costs but occasionally produces highly volatile procurement, production, or ordering decisions is practically unviable. This is especially true in multi-echelon networks, where localized errors can rapidly propagate, compounding single deviations into distorted demand signals for upstream agents over time.

The reliability issue of autonomous agents is especially salient because the underlying LLMs are stochastic and can produce inconsistent decisions across repeated runs of the same prompt. Our analysis reveals that out-of-the-box GenAI agents can be efficient on average yet unreliable: occasional ordering decisions can lead to high total supply-chain costs. We show that post-training the agents on synthetic supply-chain data dramatically improves reliability while maintaining a significant cost advantage over human decision makers.


To capture the unreliability risk in using LLM agents for decision-making, we introduce the concept of the \emph{agent bullwhip} effect: the amplification of decision instability and unreliability across runs in multi-agent systems. This instability manifests along two dimensions. First, at a fixed point in time, decision variance increases across facilities as one moves upstream: retailers tend to produce relatively stable orders, while wholesalers, distributors, and factories exhibit progressively larger dispersion and more severe tail decisions. Second, within the same facility, decision variance can grow over time as early ordering differences alter inventory positions, backlogs, and shipment pipelines, causing small behavioral deviations to compound through delayed feedback and increasing the risk of erratic ordering policies over time. We further distinguish the \emph{decision bullwhip}: the component of agent bullwhip that remains after conditioning on the demand path. This framing separates variability inherited from external demand from variability generated by the agent's own policy, and it explains why a system can be efficient on average yet unreliable across repeated executions. Because repeated sampling does not eliminate this decision-driven component, we move beyond inference-time fixes toward reinforcement-learning post-training. In particular, we propose a GRPO-based framework that trains a shared base model using system-level supply-chain rewards, enabling agents to internalize coordinated replenishment policies that reduce both cross-facility and intertemporal decision variance.

These findings point toward a future in which AI handles routine operational decisions while offering cost efficiency and flexible availability, thereby creating capacity for human experts to pursue higher-level strategic challenges in supply chains.

The main contributions of this paper are as follows:
\begin{enumerate}

    \item We use the GenAI Beer Game as a controlled testbed for autonomous supply-chain decision-making and benchmark LLM agents against human teams.
    \begin{itemize}
        \item The results identify four inference-time levers that shape effectiveness: model selection, guardrails, centralized data sharing, and prompt engineering. Model capability is the dominant lever: reasoning models exceed human-level performance out of the box, while non-reasoning models require additional constraints, orchestration, and prompting to close the gap. In our best-performing AI setup, GenAI agents reduce costs by up to 67\% relative to human teams.
        \item However, we show that strong average performance can mask substantial reliability risk. We introduce \emph{agent bullwhip}: the amplification of run-to-run decision instability in multi-echelon autonomous systems. We document two empirical manifestations: decision variance increases across facilities as one moves upstream, and decision variance can also accumulate over time within the same facility.
    \end{itemize}

    \item We isolate the \emph{decision bullwhip}, the decision-driven component of agent bullwhip that remains after conditioning on the demand path.
    \begin{itemize}
        \item We evaluate repeated sampling as a training-free mitigation strategy and find that it is ineffective. Although majority voting over multiple samples is commonly used to reduce model stochasticity, it does not meaningfully reduce agent bullwhip in our setting. This indicates that the instability is not merely incidental decoding noise; it reflects policy-level unreliability that can continue to propagate through the supply-chain network.
        \item We provide a two-facet perspective on the bullwhip effect by decomposing it into \textbf{demand bullwhip} and \textbf{decision bullwhip}. In the setting of LLM autonomous decision-making, the agent bullwhip can be dominated by decision bullwhip even if demands are constant. The decomposition clarifies why autonomous agents can create reliability risk even when external demand is fixed, and how operational evaluation should track decision stability in addition to average cost.
    \end{itemize}

    \item We propose a GRPO-based post-training framework for adapting LLM agents to the supply-chain task. The framework trains a shared base LLM using system-level supply-chain rewards, enabling agents to learn coordinated policies during training while still being deployed as independent decision-makers with limited local visibility at test time. GRPO post-training substantially curtails agent bullwhip, reduces tail events, and improves both the reliability and efficiency of autonomous supply-chain agents.
\end{enumerate}

\section{Setup and Related Work}\label{sec:Method}

\subsection{The Beer Game: A Timeless Lesson in Supply Chain Dynamics}
The Beer Distribution Game is a canonical system-dynamics environment for studying feedback, delay, and boundedly rational decision-making in supply chains \citep{forrester1961industrial,sterman1989modeling}. Its enduring lesson is that local decisions can generate system-level instability even when each participant is trying to behave rationally. The Beer Game is therefore a useful testbed for autonomous AI agents because it turns a simple decision interface into a dynamic coordination problem with delayed feedback.

In the Beer Game, four players operate a simple, serial supply chain with a retailer, a wholesaler, a distributor, and a factory. Each week, every player makes one decision: how much to order from their upstream partner. The setup is straightforward, but the constraints are revealing. Players must balance the cost of holding excess inventory against the penalty for backorders --- unfulfilled orders that must be shipped later. The structure of the beer supply chain complicates this central trade-off. Players operate in silos and cannot communicate, and only the retailer sees the actual end-customer demand. Significant built-in delays exist for both orders and shipments, and it typically takes two weeks for a shipment to arrive. This creates "pipeline inventory" --- beer that has been ordered but is not yet on hand --- which many players fail to account for. The players’ shared goal is to meet demand at the lowest possible total cost for the entire supply chain.

This structure produces the classical bullwhip effect: order signals become distorted and amplified as demand information moves upstream. A small, temporary fluctuation in customer demand creates wild swings in upstream agents' orders.  
%
A longstanding literature studies the bullwhip effect as the upstream amplification of order variability caused by information distortion in supply chains. Lee et al. \citep{lee1997bullwhip, lee1997information} introduce the phenomenon and identify key drivers, including demand signal processing, order batching, price fluctuations, and rationing behavior. Chen et al.~\citep{chen2000quantifying} quantify how forecasting rules, lead times, and information sharing affect the magnitude of bullwhip in a simple supply chain, and show that exponential smoothing forecasts can further amplify order variability depending on lead times and forecasting parameters~\citep{chen2000exponential}. Building on this literature, we study an agent-driven bullwhip from the perspective of across-run reliability: autonomous LLM agents can introduce decision variability even when demand paths and operational states are held fixed.

\subsection{GenAI in Supply Chains and the Beer Game}

Recent work on Generative AI (GenAI) in supply chain management studies language models for demand forecasting, procurement support, replenishment planning, and managerial decision support \citep{menache2025generative}. A related stream examines LLM-based or foundation-model agents for inventory management and autonomous supply chains, emphasizing natural-language interfaces, interpretable coordination, and flexible decision support across organizational boundaries \citep{simchi2025large, quan2024invagent,xu2024multi,xu2024implementing, zheng2025llms}. Much of this literature, however, focuses on specialized applications or architectures that assume substantial system design around the model.

Our setting is closer to how firms are likely to deploy frontier models: through standard interfaces, with limited ability to retrain closed-weight models. We therefore ask whether off-the-shelf GenAI agents can manage a dynamic multi-echelon supply chain when each agent controls one role in the Beer Game. 

The setup of this paper is based on a recent implementation of an AI-powered version of the Beer Game \citep{long2025genaibeergame, long2025supply}. The GenAI Beer Game is similar to the classic version, but replaces human players with LLM agents (e.g., GPT-5). Each agent takes on a single role, such as the wholesaler, and makes ordering decisions autonomously. Like human players, AI agents manage inventory, respond to downstream orders, and submit upstream orders. In contrast to many AI benchmarks that test a single LLM’s performance on a task, the GenAI Beer Game examines the agents’ ability to coordinate as a group.

In Section \ref{sec:inference-time_methods_efficiency}, we focus on "inference-time methods" to improve the effectiveness of off-the-shelf LLMs. We consider approaches that optimize how these models are used rather than changing the models themselves. Inference-time methods include crafting better instruction prompts for the agents, orchestrating information flow between agents, and designing simple rules or policies that limit what actions agents can take. Unlike existing work \citep{boussioux2025socratic} that studies prompting-based reasoning interventions for improving LLM decisions in the Beer Game, we focus on fully autonomous agent deployment, evaluating both inference-time methods and a reinforcement-learning post-training framework.

\subsection{Scaling Test-Time Compute}
We consider scaling test-time compute in the form of repeated sampling to enhance the reliability of the LLM agents and reduce tail risks. Our inference-time analysis connects to work on test-time compute, where LLM performance is improved by sampling multiple candidate responses, applying self-consistency or voting, or allocating more inference compute to harder instances \citep{wang2022self,wu2024inference,snell2024scaling,brown2024large}. These methods are attractive because they are computationally efficient, requiring no post-training of LLMs. A Beer Game decision differs from a static response to a self-contained prompt: it is a context-dependent decision whose consequences propagate through inventories, backlogs, and shipment pipelines. This distinction motivates our empirical test of repeated sampling in Section~\ref{sec:intervention_mechanisms}. If instability primarily reflects decoding noise, aggregation should stabilize decisions; if it reflects a weakness in the underlying policy, improving reliability requires a stronger intervention.

\subsection{Multi-agent Collaboration} 

Multi-agent collaboration has long been studied in supply-chain management. Early agent-based work treated supply chains as networks of autonomous but interdependent entities and focused on modeling, coordination, and decision support. Swaminathan et al. \citep{swaminathan1998modeling} develop a reusable multi-agent framework in which supply-chain models are built from agent types, control elements, and interaction protocols. Fox et al. \citep{fox2000agent} propose an agent-oriented architecture for tactical and operational supply-chain management, emphasizing autonomous software agents that coordinate through communication protocols. Nissen et al. \citep{nissen2001agent} study intelligent agents for supply-chain integration, focusing on agents that conduct business on behalf of buyers, vendors, and users. Julka et al. \citep{julka2002agent} apply an agent-based decision-support framework to refinery supply-chain management, where coordination across departments and dynamic data sources is central.

More recent work extends this tradition using learning-based and foundation-model approaches. Multi-agent reinforcement-learning studies train decentralized or partially decentralized policies for inventory control and transshipment, often using centralized training to improve coordination while preserving decentralized execution~\citep{kotecha2025leveraging,kim2024multi}. A parallel stream explores LLM-based agents for supply-chain tasks: Quan et al. \citep{quan2024invagent} introduce InvAgent, a zero-shot LLM-based multi-agent system for inventory management; Jannelli et al.  \citep{jannelli2026agentic} study autonomous LLM agents for consensus-seeking in supply-chain coordination; and Xu et al. \citep{xu2024implementing} examine autonomous supply chains through a multi-agent systems approach. Our work builds on these streams but shifts the focus from modeling architectures, average performance, or task automation to reliability. We study autonomous LLM agents whose decisions interact dynamically through inventories, backlogs, orders, and shipment pipelines, and show that these interactions can amplify decision instability across repeated runs even when demand paths and operational states are held fixed.

\subsection{Reinforcement Learning Post-training for Reliable Agents}
The reliability problem also connects to reinforcement-learning post-training for LLMs. Policy-gradient methods such as Proximal Policy Optimization provide a general framework for improving stochastic policies through reward feedback \citep{schulman2017proximal}. Group Relative Policy Optimization (GRPO) replaces a learned value critic with relative comparisons across sampled outputs, making it attractive when reward evaluation is easier than value estimation \citep{shao2024deepseekmath}. Recent reasoning models show that reinforcement learning can induce more reliable behavior when rewards are aligned with the task objective \citep{guo2025deepseekr1}. In supply chains, rewards are operationally observable through holding cost, backlog cost, and total system cost. Therefore, the Beer Game provides a useful post-training environment in which local actions have delayed consequences and system-level rewards measure coordination across echelons.

\subsection{Sample-Path Reliability and Tail Risk in Autonomous Supply Chains}

There is growing interest in evaluating decision-making policies beyond expected performance, including their distributional behavior, tail outcomes, and realized sample paths \citep{simchi2025simple,simchi2023regret,zhu2026adaptive}. This perspective is especially important in supply chains, where decisions are sequential, coupled across facilities, and exposed to demand uncertainty and lead-time delays. A policy with low expected cost may still generate unacceptable realized trajectories, such as inventory depletion, backlog accumulation, excessive order volatility, or persistent service failures.

This concern has roots in robust and risk-aware inventory theory. Classical distribution-free inventory models protect against poor realizations when the demand distribution is only partially known \citep{scarf1958minmax,gallego1993distribution}, while robust optimization approaches extend this logic to dynamic inventory and supply-chain systems \citep{bertsimas2006robust}. A parallel literature in sequential decision-making studies tail-sensitive and constraint-aware policies, including CVaR optimization \citep{rockafellar2000optimization,chow2015risk}, safe reinforcement learning \citep{garcia2015comprehensive}, and constrained policy optimization \citep{achiam2017constrained}. Our setting differs from this work in that reliability risk is generated not only by exogenous demand uncertainty or model misspecification, but also by stochastic LLM decisions that interact through multi-echelon feedback.

This paper brings the sample-path perspective to autonomous AI agents. Standard evaluations of LLM agents often emphasize average task performance or aggregate cost, but supply-chain deployment also requires consistency across repeated executions of the same operational environment. Because LLM decisions are stochastic, the same state can induce different actions across runs; in a multi-echelon network, those differences can propagate upstream and compound over time. We refer to this amplification of run-to-run decision instability as the agent bullwhip effect. It links sample-path reliability to the structure of the supply chain: reliability risk is not only a property of an individual model response, but also an emergent property of delayed, decentralized coordination. This framing motivates both our empirical reliability analysis and our use of post-training to learn more stable supply-chain policies.



\section{Inference-time Methods: Lessons Learned Using GenAI as Autonomous Agents without Training} \label{sec:inference-time_methods_efficiency}
In this section, we evaluate various types of large language models (LLMs) and examine inference-time methods that can improve the performance of LLM agents in supply-chain decision-making. Here, inference-time strategies reflect how most people interact with GenAI agents: by submitting natural-language queries. Because LLMs are trained on heterogeneous data sources, they can generate meaningful responses with little or no task-specific customization. This raises two central questions: Can these models be deployed ``as is'' to manage complex supply-chain tasks effectively? If not, are there generalizable inference-time strategies that firms can use to steer GenAI models toward better performance?

We first note the generational gap between earlier and more recent LLMs. Recent models are increasingly equipped with reasoning capabilities that enable them to decompose complex decision problems into smaller, more tractable steps and use explicit intermediate reasoning to guide action selection. This capability is particularly relevant in supply-chain settings, where local ordering decisions interact dynamically with inventory positions, backlogs, delays, and upstream amplification. We report the performance of various LLMs in the Beer Game and benchmark them against human teams. Our results show that more advanced models with stronger reasoning capabilities substantially outperform earlier model generations. At the same time, earlier models can perform effectively when supported by appropriate inference-time interventions, particularly curated information sharing and coordination through a central orchestrator.

Our experiments identify four inference-time levers that are critical to the autonomous use of GenAI in supply chains: (1) model selection, (2) policies and guardrails, (3) data sharing through a centralized orchestrator, and (4) prompt engineering. The most important lever is the selection of a capable model. However, even the most advanced models require careful deployment and guidance. Performance can be substantially improved by sharing the right data through a centralized orchestrator to enhance coordination and by using appropriately designed prompts. Together, these four levers determine whether the integration of autonomous agents succeeds or fails.

Using all four levers, we demonstrate that GenAI agents outperform human teams operating under identical conditions, providing evidence for their potential integration into real-world supply-chain operations.

\subsection{Generational Leap of LLMs}
The choice of LLM used for the agents is the single most important determinant of performance because an agent’s underlying reasoning capability directly affects supply chain costs and system stability. Less advanced models can amplify system noise into costly bullwhip effects, whereas more advanced models can attenuate such amplification. To evaluate reliability, we conducted multiple identical Beer Game runs for each model. In our decentralized setup --- that is, a setting in which no information is shared across agents --- we found that many earlier-generation models were highly inefficient, producing pronounced bullwhip effects and generating costs an order of magnitude higher than those of human teams. These models were also unreliable: across identical runs, total costs varied substantially, ranging from 13\% to 46\% of the mean.

More concerningly, some models failed to follow instructions, leading to systemic breakdowns. In our trials, models such as Microsoft’s Phi-4 and DeepSeek-R1-0528 violated basic ordering rules in more than 25\% of cases.

However, recent models with advanced reasoning capabilities demonstrate a clear improvement in performance. For example, upgrading agents from GPT-4o mini to GPT-5 mini reduced total supply chain costs by 70\%. Similarly, the newer and more lightweight Llama 4 Maverick 17B model substantially outperformed its much larger predecessor, Llama 3.3 70B, reducing costs by 82\%, although its results remained unstable. These findings indicate that firms should prioritize reasoning ability and instruction-following when selecting a model; subsequent interventions should be viewed as performance optimizations.

\subsection{Policies and Guardrails to Limit Costly Errors}
Policies and guardrails are most valuable not because they make agents inherently better at solving the underlying decision problem, but because they prevent agents from taking high-cost actions when their reasoning or forecasts fail. Constraints on an agent’s range of possible actions can therefore materially improve both efficiency and reliability. This is particularly important in supply-chain settings, where simple guardrails can prevent panic-induced over-ordering that triggers costly bullwhip cascades across upstream suppliers. In our experiments, a simple budget constraint proved especially effective. Each agent was assigned a fixed budget, and orders were not allowed to exceed available funds. This hard constraint operates as a brake on panic buying: when an agent experiences a stockout and attempts to place an excessively large order, the budget limit forces a more measured response, reducing demand amplification and limiting the propagation of shocks upstream.

The effects were substantial: total costs decreased by 25\% for GPT-5 mini, 39\% for GPT-4o mini, and 41\% for Llama 4 Maverick 17B. For capable but less stable models such as Llama 4 Maverick 17B, the improvement in reliability was also pronounced, with cross-run variation in performance declining from 46\% to 37\%. These findings suggest that, once a sufficiently capable model has been selected, firms should adopt targeted operational policies to limit erratic behavior. A hard budget constraint represents a particularly high-leverage intervention, as it directly restricts the excessive ordering behavior that contributes to instability.

\subsection{Information Orchestration: Share Curated Data Through a Central Orchestrator}
To evaluate how information sharing affects agent performance, we introduced a central “orchestrator”: an agent with full visibility across the supply chain and responsibility for sharing specific, curated information with the agents participating in the game. This design reflects an important distinction between human and LLM-based decision-making. Information that is useful to human teams may distract an AI agent, resulting in poorer decisions and higher costs. Accordingly, we evaluated two information-sharing strategies in which the orchestrator shared information but did not make decisions. The results indicate that more data is not necessarily better.

In the first scenario, the orchestrator shared only real-time customer demand. When agents received the current week’s customer demand, performance improved across all models. Total costs decreased by approximately 18\% for GPT-5 mini, 25\% for Llama 4 Maverick 17B, and 38\% for GPT-4o mini.

In the second scenario, agents received both demand history and analysis. Specifically, we provided a five-week demand history and volatility analysis. The results were mixed. This richer information significantly improved performance for less advanced models, with costs for GPT-4o mini decreasing by 69\%. However, for more advanced models, the additional information appeared to act as a distraction, and performance was worse than when agents received only real-time demand.

Notably, other data elements that typically benefit human decision makers --- such as inventory position or pipeline inventory --- provided limited benefits and often exacerbated the bullwhip effect. These findings suggest that firms should be selective and empirically test which data are shared with AI agents. For more advanced models, less information is often more effective.

\subsection{Prompt Design}
Because LLMs are probabilistic systems, task framing matters. Prompt design can substantially improve the performance of less advanced models, although it may provide limited benefits for more advanced models. In our experiments, reframing the objective --- the instruction provided to the LLM --- from the general goal of “minimize total costs” to the more specific objective of “minimize the weighted average of backlog and holding costs” generated large gains for less advanced models. This prompt revision reduced costs by 44\% for GPT-4o mini and 33\% for GPT-4.1 mini. For more advanced models, the effect was negligible.

These results indicate that prompt design should be used as a secondary performance lever. It can unlock meaningful improvements in less advanced models, but it is not a substitute for strong reasoning capabilities, robust guardrails, and curated data sharing.

\subsection{Summary: Reasoning Models and Critical Design Levers Achieve Above-Human Performance}
Benchmarking against human performance is a critical test for autonomous systems. Demonstrating competitiveness with trained professionals validates the potential for integrating AI agents into real-world supply chain operations. This competitive performance, together with AI agents’ inherent advantages in cost efficiency and continuous availability, provides a clear motivation for adoption.

Our most significant finding is that GenAI agents powered by state-of-the-art models can manage a supply chain at a level of proficiency that not only rivals, but can exceed, that of human experts. We benchmarked multiple GenAI agent configurations against historical performance data from 12 Georgia Tech cohorts, comprising more than 100 students operating the same Beer Game. The results were striking. As illustrated in Figure \ref{fig:inference_time_methods_bullwhip_human_comparison} (Left), earlier-generation models exhibit suboptimal performance out of the box, roughly doubling the supply-chain costs achieved by humans. However, when provided with appropriate information through an orchestrator, they can outperform human teams by reducing costs by 32\%.

More advanced models with reasoning capabilities, by contrast, achieve competitive performance even when deployed out of the box. As shown in Figure \ref{fig:inference_time_methods_bullwhip_human_comparison} (Left), GPT-5 mini achieved a 33\% cost reduction relative to human teams when operated out of the box. Most notably, when these models are optimized using additional strategic levers --- specifically enhanced information sharing and policy constraints such as budget limitations --- AI agents powered by GPT-5 mini and Llama 4 Maverick 17B achieved 50\% to 67\% reductions in total supply-chain costs relative to their human counterparts, as shown in Figure \ref{fig:inference_time_methods_bullwhip_human_comparison} (Right).

\begin{figure}[t]
    \centering
    \begin{minipage}{1\textwidth}
        \centering
        \begin{minipage}{0.49\textwidth}
            \centering
        \includegraphics[width=\textwidth]{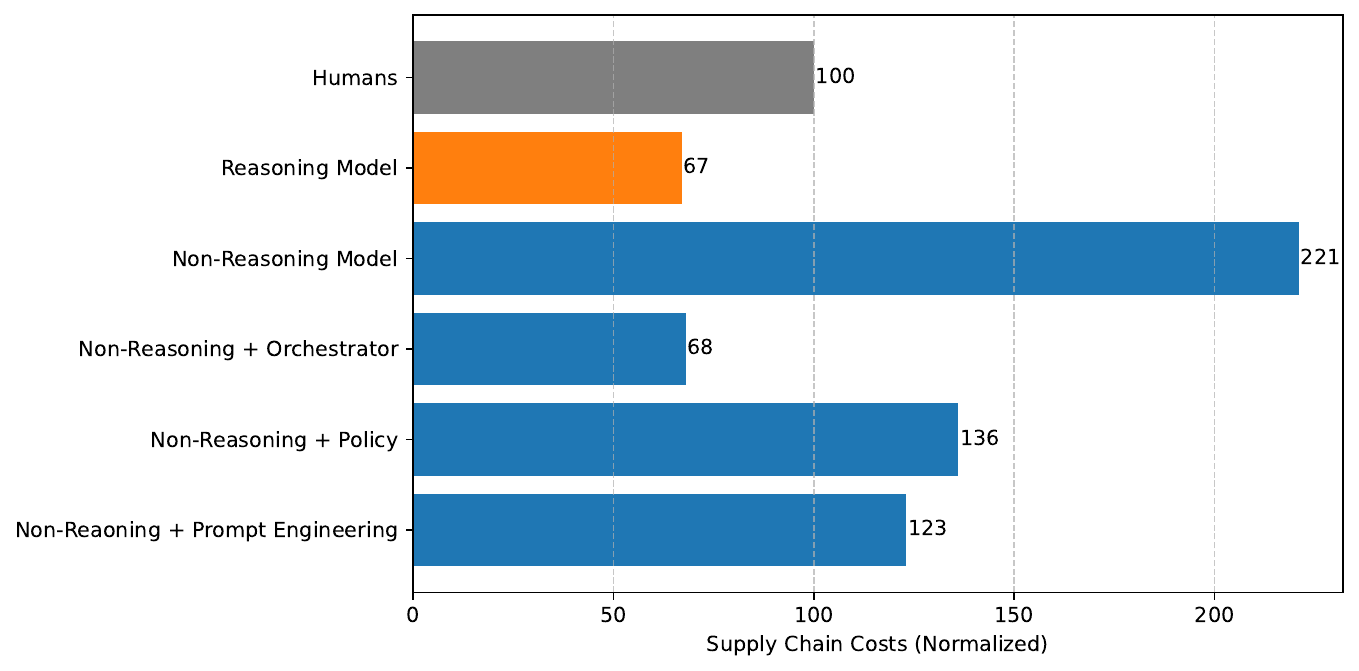}
        \end{minipage}
        \hfill
        \begin{minipage}{0.49\textwidth}
            \centering
            \includegraphics[width=\textwidth]{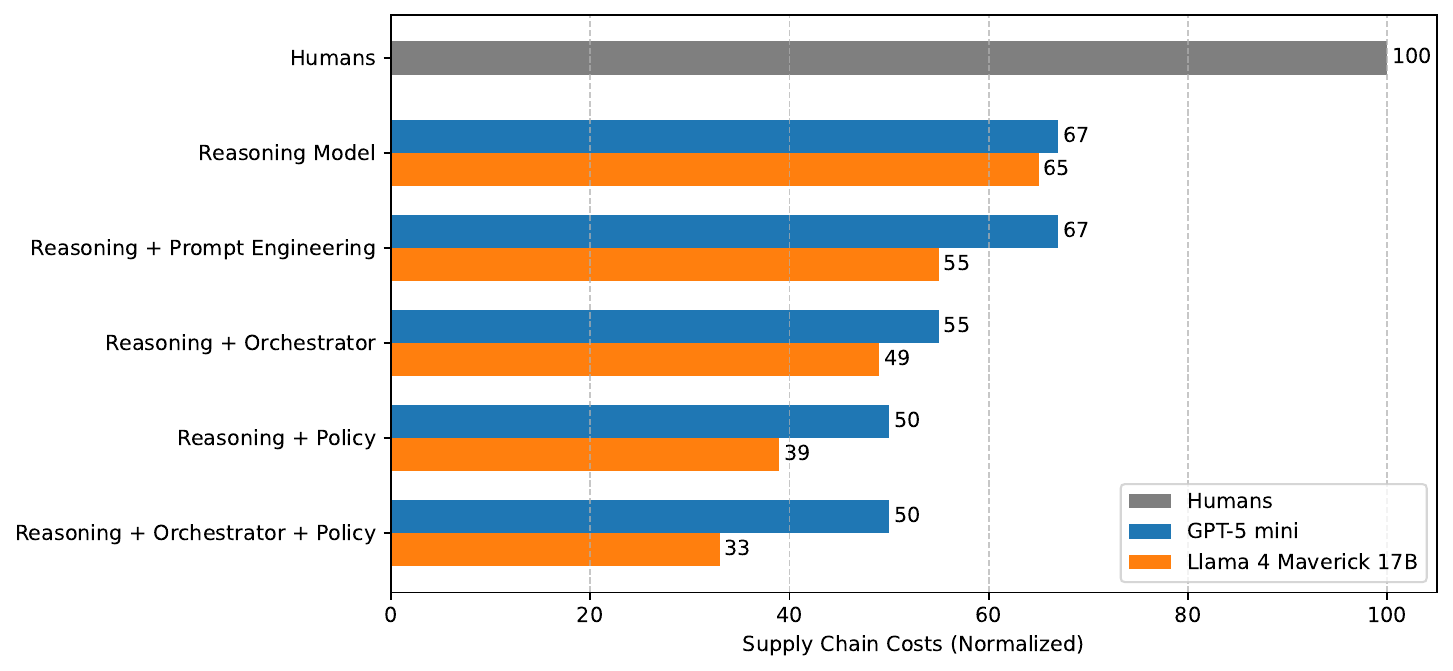}
        \end{minipage}
    \end{minipage}
    \caption{Comparisons of AI setups against human teams in supply-chain cost performance. The out-of-the-box reasoning model (left, GPT-5 mini) exceeded human-level performance, while non-reasoning models (GPT-4o mini) required policy constraints, orchestration, and prompt engineering to close the gap with humans. Optimized with the same techniques, reasoning models (right, GPT-5 mini and Llama 4 Maverick 17B) achieved up to a 67\% cost reduction relative to human teams in the MIT Beer Game.}
    \label{fig:inference_time_methods_bullwhip_human_comparison}
\end{figure}

This finding has important implications. It demonstrates that autonomous AI agents are already capable of handling the complex, dynamic decision-making required for core supply chain functions. By delegating such operational tasks to reliable GenAI agents, human managers can redirect their attention from day-to-day operational routines toward higher-value activities, such as strategic network design, supplier relationship management, navigating major disruptions, and breaking down the functional silos that currently separate supply chain, finance, sales, and trade. In this setting, the role of the supply chain professional evolves from operator to strategist.

Overall, these findings suggest that GenAI agents, when deployed with state-of-the-art models and appropriate configuration, can achieve decision-making quality comparable to that of human experts. This represents a pivotal opportunity to accelerate the transition toward autonomous supply chains and to redeploy human expertise toward more strategic and creative supply chain challenges.

Although the results in Figure~\ref{fig:inference_time_methods_bullwhip_human_comparison} are encouraging, they report average costs over identical simulation runs and therefore obscure a reliability question that is central to real-world deployment: do agents perform consistently well across runs? To examine this issue, Table~\ref{tab:supply_chain_costs_cv} reports detailed statistics across 30 runs of each AI setup, including mean total supply-chain cost, standard deviation, and coefficient of variation. The results reveal substantial variability. In particular, GPT-5 mini and Llama 4 Maverick 17B exhibit high coefficients of variation, ranging from 37\% to 46\% of the mean. This level of run-to-run instability poses a significant operational risk: even when an agent performs well on average, firms may still face occasional but costly failures that disrupt inventory planning, amplify upstream orders, and undermine trust in autonomous supply-chain management. The issue of reliability is addressed through reinforcement-learning post-training of the underlying LLM in the next section.

\begin{table}[ht]
    \centering
    \caption{Supply Chain Cost and Coefficient of Variation Across Runs by Model Type}
    \label{tab:supply_chain_costs_cv}
    \begin{adjustbox}{width=\textwidth}
        \begin{tabular}{llrrr}
            \toprule
            \textbf{Scenario} & \textbf{Model} & \textbf{Total Costs} & \textbf{Standard Deviation} & \textbf{Coefficient of Variation} \\
            \midrule
            Reasoning Model         & GPT-5 mini default           & 2142 & 906.24 & 0.4231 \\
            Reasoning + Policy      & GPT-5-mini budget            & 1608 & 631.34 & 0.3926 \\
            Non-Reasoning Model     & GPT-4o mini default          & 7093 & 890.70 & 0.1256 \\
            Non-Reasoning + Policy  & GPT-4o-mini budget           & 4351 & 242.58 & 0.0558 \\
            Reasoning Model         & Llama 4 Maverick 17B default & 2080 & 956.31 & 0.4598 \\
            Reasoning + Policy      & Llama 4 Maverick 17B budget  & 1235 & 453.69 & 0.3674 \\
            \bottomrule
        \end{tabular}
    \end{adjustbox}
\end{table}

\section{Reliability Issues of Autonomous AI Agents: the Agent Bullwhip Effect}

As demonstrated in the preceding sections, autonomous GenAI agents, when optimally configured, can achieve strong average performance in supply-chain settings, potentially surpassing human teams. However, mean performance alone is insufficient to justify operational deployment in real-world applications. Supply-chain practitioners must evaluate not only expected cost outcomes but also system consistency, robustness, and exposure to tail risks. An autonomous policy that yields low expected costs but occasionally produces highly volatile procurement, production, or replenishment decisions is practically unviable.

This concern becomes particularly salient when the same autonomous supply-chain system is run repeatedly under identical conditions. Because LLMs are inherently probabilistic, even advanced models generate different decisions across identical runs. In our experiments, we ran LLM-powered GenAI agents on the same GenAI Beer Game environment across 30 repeated trials. If the agents used a stable inventory policy, one would expect these repeated runs to produce similar ordering trajectories. Instead, as shown in Table~\ref{tab:supply_chain_costs_cv}, there is significant run-to-run variation in agents' orders, even though the environment, prompts, and system structure are all held fixed. This variability should not be dismissed as a mere technical artifact of language models; rather, it constitutes a critical operational vulnerability.

The risk is especially pronounced in multi-echelon networks, where localized ordering errors can propagate rapidly and compound over time, transforming isolated deviations into distorted demand signals for upstream agents. Instability at a single node can therefore cascade through the network, generating costly tail outcomes for the supply chain as a whole.

\subsection{Agent Bullwhip Effect}

As described in Section~\ref{sec:Method}, the classical bullwhip effect refers to the amplification of order quantities as one moves upstream in the supply chain. Let $q_{k,t}^{(r)}$ denote the order placed by facility $k$ in period $t$ during simulation run $r$. The classical bullwhip effect can be expressed as an increase in order variance across echelons:
\[
    \mathcal{B}_i^{(r)}
    =
    \frac{\operatorname{Var}_{t}\!\left(q_{k,t}^{(r)}\right)}
    {\operatorname{Var}_{t}\!\left(q_{k-1,t}^{(r)}\right)}
    > 1.
\]
In the Beer Game, a modest change in customer demand can therefore turn into much larger order swings at the wholesaler, distributor, and factory.
Our findings suggest that when autonomous LLM agents make these decisions, there is a second layer of amplification. Not only do order levels amplify upstream, but the dispersion and tail risk of the decision itself also amplify across otherwise identical runs. 

\begin{definition}[Agent bullwhip]
\label{def:agent-bullwhip}
Consider a discrete-time serial supply chain with \(n\) echelons indexed by \(k=1,\dots,n\), where tier \(0\) represents external customer demand.  Consider repeated runs \(r=1,\ldots,R\) of the same supply-chain environment under an identical demand path, system configuration, and agent setup. Let \(q_{k,t}^{(r)}\) denote the order placed by echelon \(k\) in period \(t\) during run \(r\). Define the run-to-run variance of echelon \(k\)'s order in period \(t\) as
\[
    \sigma_{k,t}^{2}
    =
    \operatorname{Var}_{r}\!\left(q_{k,t}^{(r)}\right).
\]

We say that \emph{agent bullwhip} occurs when decision instability, measured across repeated runs, is amplified by the supply-chain system. This amplification can manifest along two dimensions: across echelons within a fixed time period, and over time within a fixed echelon.
First, run-to-run decision variance increases upstream at a fixed period \(t\):
\[
    \sigma_{k,t}^{2} > \sigma_{k-1,t}^{2}.
\]
Equivalently, for adjacent echelons, define
\[
    \Psi_k(t)
    =
    \frac{
    \operatorname{Var}_{r}\!\left(q_{k,t}^{(r)}\right)
    }{
    \operatorname{Var}_{r}\!\left(q_{k-1,t}^{(r)}\right)
    }.
\]
A value of \(\Psi_k(t)>1\) indicates that run-to-run decision variance is amplified as orders move from echelon \(k-1\) to echelon \(k\). Because fixed-demand experiments can have zero run-to-run variance at tier \(0\), we avoid normalizing by \(\operatorname{Var}_r(q_{0,t}^{(r)})\). Instead, when adjacent denominators are positive, cumulative adjacent-tier amplification can be summarized by
\[
    C_j(t)
    =
    \prod_{k=1}^{j} \Psi_k(t).
\]

Second, \emph{intertemporal agent bullwhip} occurs when run-to-run decision variance increases over time within the same echelon:
\[
    \sigma_{k,t+1}^{2} > \sigma_{k,t}^{2}.
\]
Equivalently, define the within-echelon amplification ratio
\[
    \Phi_k(t)
    =
    \frac{
    \operatorname{Var}_{r}\!\left(q_{k,t+1}^{(r)}\right)
    }{
    \operatorname{Var}_{r}\!\left(q_{k,t}^{(r)}\right)
    },
\]
with \(\Phi_k(t)>1\) indicating that decision instability accumulates over time for facility \(k\).
\end{definition}

\begin{figure}[hbp]
    \centering
    \includegraphics[width=0.85\textwidth]{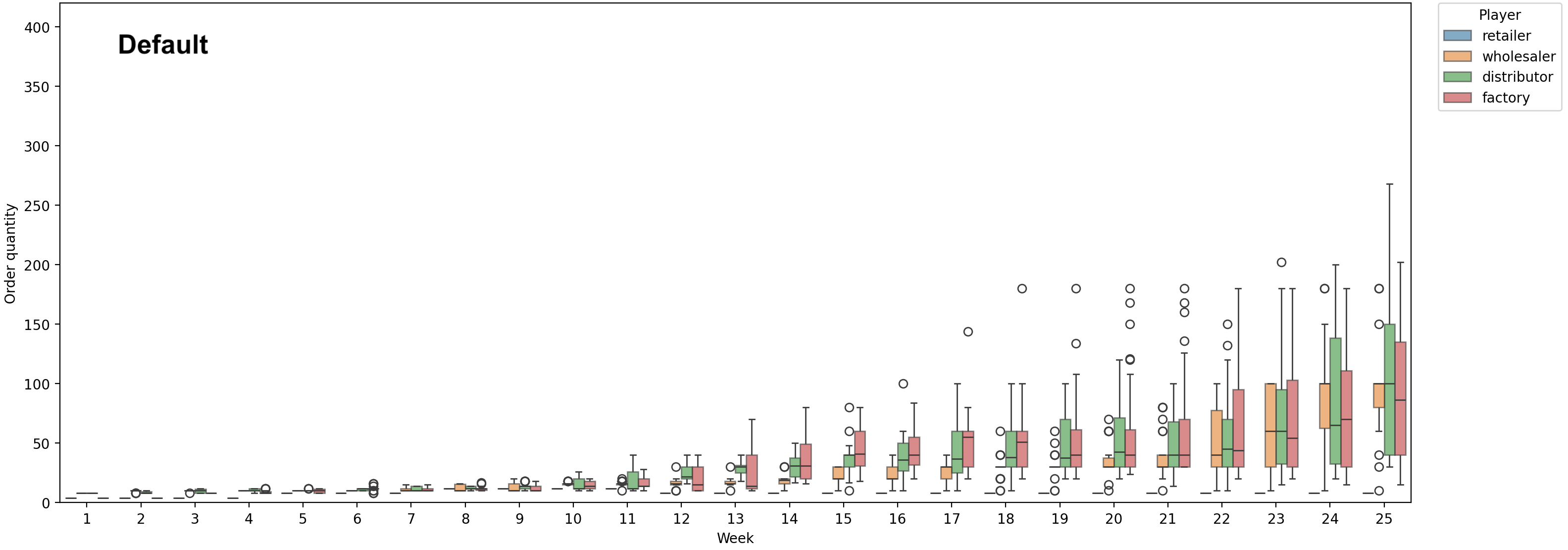}
    \caption{\textbf{Agent bullwhip: order variability across agents and time.} 
    For each week and facility, the colored box captures the middle 50\% of orders across repeated runs, the center line denotes the median, the whiskers show the non-outlier range beyond the interquartile range, and circles represent outlier orders. The amplification of decision unreliability across echelons manifests along two dimensions: decision variance increases across facilities at a fixed point in time and within each facility over time.
    }
    \label{fig:second_order_bullwhip_default}
\end{figure}

\subsection{Agent Bullwhip in Action}

To demonstrate the agent bullwhip effect in practice, we analyze order quantities across 30 runs of the GenAI Beer Game driven by Qwen-3 4B under the same demand path, game structure, and prompt. We plot the resulting order variability across facilities and weeks in Figure \ref{fig:second_order_bullwhip_default}. Because demand is identical across runs, the dispersion in the figure reflects conditional decision instability rather than demand variability. Across all weeks, the retailer exhibits minor volatility in order quantities, but this variation becomes substantially larger for the wholesaler and most pronounced for the distributor and the factory.

We can analyze the agent bullwhip along \emph{two dimensions}, as run-to-run order variance compounds across agents and time. First, holding a given week fixed, such as Week 15 in Figure~\ref{fig:second_order_bullwhip_default}, order variability is barely visible for the retailer but increases as one moves upstream to the wholesaler, distributor, and factory. 
%
Second, agent bullwhip accumulates intertemporally within a fixed facility. For a given facility, such as the distributor shown in green, order variability increases over time as early decision shocks perturb inventory positions, backlogs, and pipeline inventories. These perturbed states then feed into later ordering decisions, allowing small differences across runs to persist and compound through lead times and delayed feedback. Thus, even under identical customer demand paths and system configurations, instability compounds both across agents and over time. 

This result is counter to a common intuition about autonomous AI agents: agents do not become more reliable merely by interacting with the environment and making repeated decisions. The second dimension of agent bullwhip, order variability over time within a fixed facility, shows that decisions can become increasingly unstable across repeated interactions, even when the environment and agent configuration are held fixed and cost feedback is provided. This suggests that explicit intervention mechanisms are needed to support reliable autonomous supply-chain decision-making.

The traditional bullwhip effect captures amplification in realized order patterns. It shows that, in a given game, upstream nodes may order more aggressively than downstream nodes in response to demand shocks. However, it does not capture whether an agent's decisions are stable across repeated exposure to the same environment. Agent bullwhip captures this missing component. The variance of decisions across repeated runs reflects uncertainty in the task as perceived and processed by the agent. When that variance grows upstream and over time, it indicates that the autonomous system is not merely reacting strongly to demand; it is becoming less reliable in how it interprets and responds to the same operational state.

This result raises an important caution: a supply chain governed by autonomous LLM agents may appear effective when judged by average cost alone, yet remain operationally fragile if it exhibits high decision variability. In practice, reliability is a first-order performance criterion. Firms are unlikely to delegate planning decisions to an agent that occasionally makes unreasonable choices, even if its average performance is strong.

Firms must be able to trust that a planning system will produce similar recommendations when faced with the same inputs, particularly in settings where supply-chain decisions create downstream financial commitments and upstream production responses. The presence of agent bullwhip suggests that reliability risk is endogenous to the multi-agent supply-chain structure: uncertainty is not only present at the individual-agent level, but is also amplified by the network and by time delays. This creates a new class of tail risk for autonomous operations and complicates the case for deploying off-the-shelf LLM agents without additional safeguards or specialized training.

The central challenge, therefore, is to identify mechanisms that improve agent reliability. We begin with a training-free approach commonly used to reduce unreliability arising from model stochasticity: repeated sampling at test time.

\subsection{Test-Time Interventions Fail to Improve Reliability}
\label{sec:intervention_mechanisms}

\begin{figure}[ht]
    \centering
    \begin{minipage}{1\textwidth}
        \centering
        \begin{minipage}{0.9\textwidth}
            \centering
        \includegraphics[width=\textwidth]{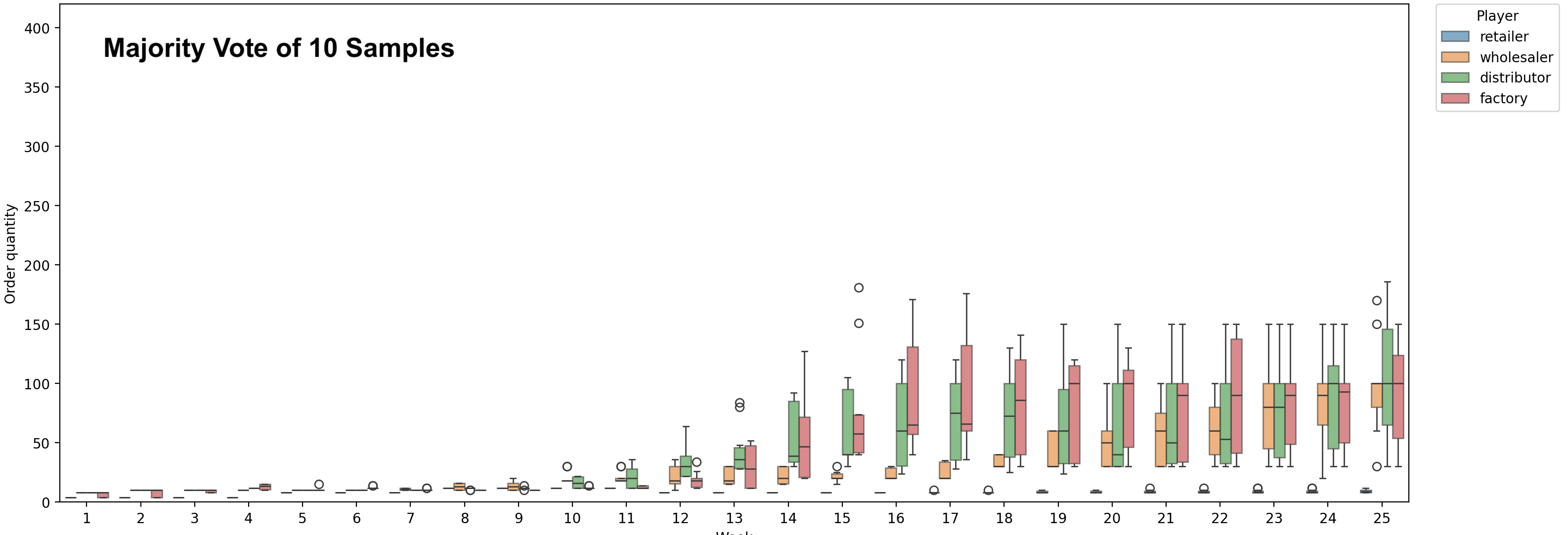}
        \end{minipage}
        \hfill
        \begin{minipage}{0.9\textwidth}
            \centering
            \includegraphics[width=\textwidth]{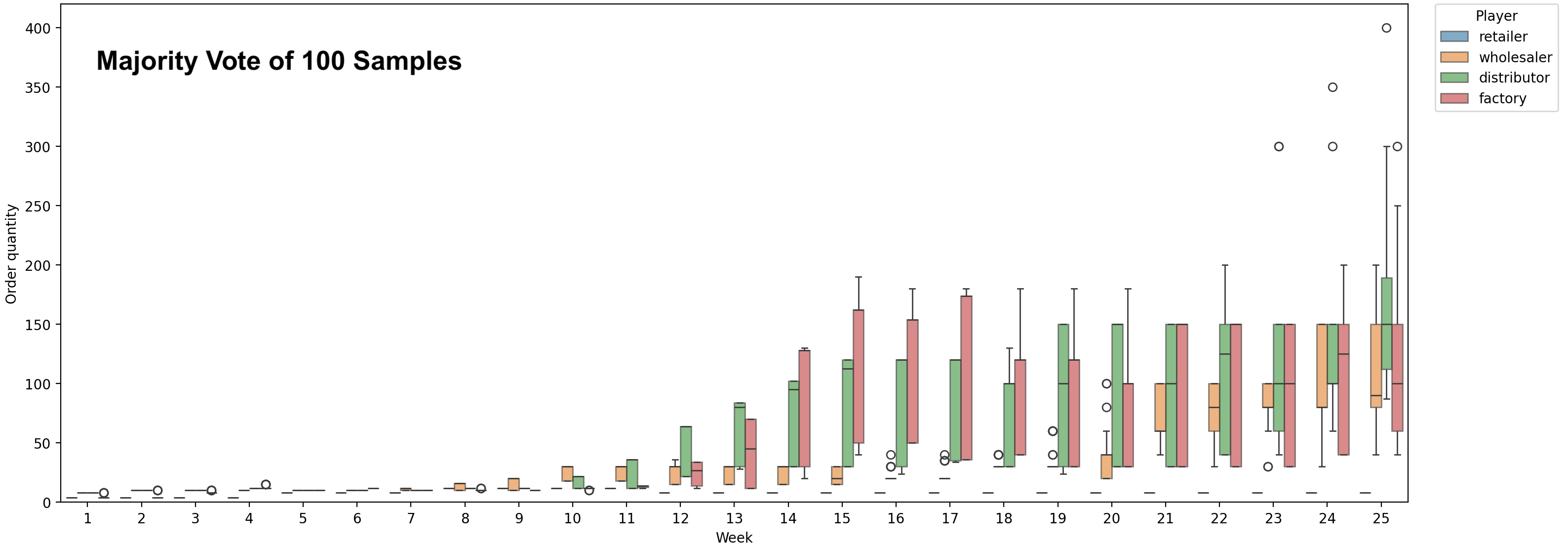}
        \end{minipage}
    \end{minipage}
    \caption{\textbf{Effect of repeated sampling on agent bullwhip.} The top panel reports results in which each order decision is determined by majority vote over 10 independent samples, while the bottom panel uses 100 samples. Increasing test-time sampling does not reduce run-to-run variability, indicating that decision instability requires policy-level intervention, such as reinforcement-learning post-training of LLM agents.}
    \label{fig:test-time-scaling}
\end{figure}


A common approach in the computer science literature to address model unreliability at test time, without model post-training, is to introduce redundancy by drawing multiple outputs and aggregating them, often via majority voting~\citep{wang2022self}. If order instability were driven primarily by random decoding noise, such ensembling should induce policy convergence and stabilize performance. To test this hypothesis, we compare the default single-sample baseline (Figure~\ref{fig:second_order_bullwhip_default}) with majority-voting schemes based on 10 and 100 samples.

Figure~\ref{fig:test-time-scaling} shows that repeated sampling fails to mitigate unreliability. Substantial run-to-run variation persists, and tail events remain pronounced. This reflects a deeper issue: instability arises from suboptimal decision policies rather than incidental randomness. Off-the-shelf models lack a stable inventory-control policy and exhibit systematic errors, such as overreacting to backlogs or neglecting pipeline inventory. When the model is structurally uncertain, additional samples can simply reproduce the same deficient reasoning.

More broadly, test-time ensembling is insufficient for dynamic operational settings. While redundancy can reduce incidental noise, supply-chain management requires structured decision rules that account for delayed information and intertemporal trade-offs.

The next subsection sharpens this intuition by separating demand-driven variability from decision-driven variability. This distinction clarifies why repeated sampling may leave substantial residual instability: it can average over some local randomness, but it does not teach the agent a stable replenishment policy. Achieving reliability therefore requires modifying the underlying policy, which motivates our reinforcement-learning post-training approach.

\subsection{Understanding Agent Bullwhip}

This subsection explains how the agent bullwhip effect arises and why repeated sampling fails to mitigate it. The goal is not to claim that LLM decisions follow a particular inventory-control formula. Rather, the goal is to separate two sources of order variability that are confounded in realized trajectories: variability inherited from customer demand and variability created by the agent's own decision policy.

\paragraph{Agent bullwhip decomposition.} We distinguish two sources of randomness that contribute to the agent bullwhip effect. The first is the external demand path \(D=\{D_t\}_{t\ge0}\). The second is the agent's decision randomness \(\epsilon=\{\epsilon_{k,t}\}\), which summarizes run-specific variation in how tier \(k\) interprets the same operational state and converts it into an order. We use \(\epsilon\) as a reduced-form representation of stochastic LLM behavior; it need not correspond to an explicit perturbation chosen by the model. Consider a discrete-time serial supply chain with \(n\) echelons indexed by \(k=1,\dots,n\), where tier \(0\) represents external customer demand and \(q_{0,t}=D_t\). When conditioning on a realized demand path, we write \(D=d\).

The distinction matters because the classical bullwhip effect concerns the propagation of demand uncertainty, whereas autonomous agents can also introduce decision uncertainty even when the demand path and operational state are held fixed. In LLM decision-making, repeated sampling or best-of-\(n\) selection can reduce some output noise, but any residual policy-level variation still enters the supply-chain dynamics. The law of total variance separates these two channels. For each tier and period,
\begin{equation}
\operatorname{Var}(q_{k,t})
=
\underbrace{\operatorname{Var}_{D}\!\left(\mathbb E_{\epsilon}[q_{k,t}\mid D]\right)}_{V^D_{k,t}}
+
\underbrace{\mathbb E_{D}\!\left[\operatorname{Var}_{\epsilon}(q_{k,t}\mid D)\right]}_{V^\epsilon_{k,t}}.
\label{eq:total-variance-components}
\end{equation}

\paragraph{A two-facet perspective.} The first term, \(V^D_{k,t}\), is the demand-driven component: it measures how external demand fluctuations propagate upstream after averaging over the agent's internal randomness. The second term, \(V^\epsilon_{k,t}\), is the decision-driven component: it measures run-to-run instability generated by autonomous agents when the demand path is held fixed. We call upstream amplification of \(V^D_{k,t}\) the \emph{demand bullwhip}, and upstream amplification of \(V^\epsilon_{k,t}\) the \emph{decision bullwhip}.

This distinction is important for intervention design. Classical bullwhip mitigation mechanisms, such as demand smoothing, order coordination, and information sharing, primarily target the demand-driven component \(V^D_{k, t}\). They can reduce the variability inherited from external demand, but they do not eliminate the decision-driven component \(V^\epsilon_{k, t}\), which is generated by the agent's own policy even after conditioning on a fixed demand path. In the context of LLM agents, this means that two runs with the same demand realization and the same operational state may still produce different order-up-to targets, and these residual differences are then propagated upstream by the supply-chain feedback loop.

\paragraph{Decision bullwhip may dominate.} Decision bullwhip can dominate in regimes where agents are tuned for prediction stability. In \Cref{sec:theory-bullwhip}, we discuss a stylized inventory model following \citep{chen2000quantifying, chen2000exponential}. Our results (\Cref{thm:demand-bullwhip-lower-bounds} and \Cref{thm:decision-bullwhip-lower-bounds}) show that when forecasts change smoothly, demand-driven amplification can be relatively mild over a moderate number of tiers. By contrast, residual decision variance can still accumulate across tiers. If decision variances are of comparable magnitude across facilities, then agent-level randomness can become a major source of order variability even when demand itself is stable.

This also highlights why repeated sampling is an incomplete remedy. Majority voting or best-of-\(n\) sampling may reduce the local decision variance, but unless it drives this variance very close to zero, the remaining decision noise continues to enter the feedback system and propagate upstream. The structural source of the agent bullwhip is therefore not merely stochasticity at a single decision point; it is the \emph{interaction} between decision variability, lead times, information delays, and decentralized replenishment decisions inherent in autonomous supply chains.

\paragraph{Example: LLM performance under constant demands.} We next isolate the decision channel empirically by running the same LLM-powered supply chain under constant customer demand across repeated trials. In this experiment, exogenous demand contributes no run-to-run variability, so any dispersion in orders comes from the agent policy and from how early decision differences propagate through inventories, backlogs, and pipeline shipments. Figure~\ref{fig:second_order_bullwhip_constant} shows that substantial order dispersion remains even in this fixed-demand setting. The pattern mirrors the agent bullwhip documented above: variability is modest near the retailer but becomes more pronounced upstream and over time. This provides direct evidence of decision bullwhip, because the instability cannot be attributed to changes in customer demand.

\begin{figure}[ht]
    \centering
    \includegraphics[width=0.85\textwidth]{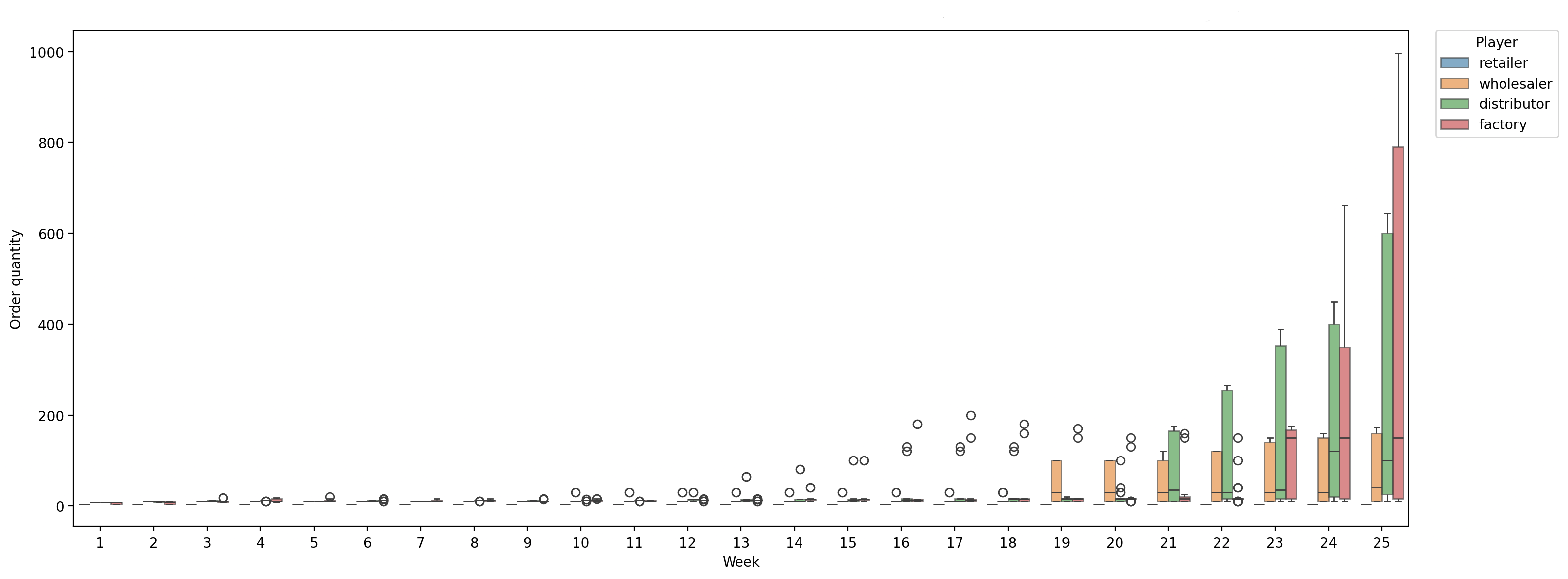}
    \caption{\textbf{Decision bullwhip: order variability across agents and time under constant demands.} 
    Customer demand is held constant as $4$ across repeated runs, so order dispersion measures conditional decision instability rather than demand variability. Persistent dispersion across facilities and weeks indicates that stochastic LLM decisions can generate a decision bullwhip even in the absence of demand shocks.
    }
    \label{fig:second_order_bullwhip_constant}
\end{figure}

\paragraph{Implication for LLM deployment.} In practical deployments of LLM agents for multi-echelon supply-chain management, the observed agent bullwhip effect should therefore be understood as the combined outcome of demand-driven and decision-driven amplification. The demand channel captures how external uncertainty propagates through the system, while the decision channel captures how run-to-run instability in agent policies is generated and amplified even under a fixed demand path. Addressing the latter requires changing the decision policy itself, rather than merely averaging over its outputs. This motivates our subsequent approach: training supply-chain-specialized agents using Group Relative Policy Optimization (GRPO), which directly targets decision behavior by learning more stable and coordinated inventory-management policies.


\IfFileExists{train_setup.tex}{\section{Training Supply-Chain-Specialized Agents with Group Relative Policy Optimization (GRPO)}
\label{sec:grpo}


The reliability analysis above indicates that agent bullwhip is not merely a surface-level sampling artifact. The fixed-demand experiment isolates a decision-driven component of order variability, and the failure of repeated sampling shows that averaging outputs is not enough to produce a stable replenishment policy. The problem lies deeper, in the absence of a learned decision policy that can reliably coordinate across echelons and optimize system-level outcomes.

This pattern is consistent with the nature of the task. Off-the-shelf, general-purpose LLMs exhibit broad linguistic competence and strong generalized reasoning, but they are not explicitly trained to internalize the dynamics of inventory replenishment, including lead times, delayed feedback loops, multi-echelon coordination, and system-wide cost trade-offs. This challenge is analogous to robotics, where models often require task-specific training before they can operate reliably in unfamiliar environments. In the supply-chain setting, the absence of such specialization leads to erratic heuristic responses, substantial run-to-run variability, and severe tail risks.

The natural way to turn a general-purpose LLM into an inventory-management agent is reinforcement-learning post-training. In this framing, the LLM is a stochastic policy whose input is the supply-chain state observed by a player and whose output is that player's ordering decision. Post-training refines the LLM's parameters so that an agent using the LLM produces effective decisions on the trajectories the policy itself induces.

This motivates the central question: can reinforcement-learning post-training transform a general-purpose LLM from a capable but volatile decision-maker into a reliable inventory-management agent? More specifically, can feedback from realized supply-chain performance induce a replenishment policy that is both cost-effective and robust across runs?

Applying standard reinforcement-learning algorithms to the Beer Game is difficult. Most modern policy-gradient methods, such as actor-critic algorithms, rely on a learned value function that maps states to expected discounted returns, and several features of this environment make such a value function unreliable: each agent's state includes inventory, backlog, pipeline, and recent orders; each agent sees only its local state and the orders and shipments from its immediate neighbors; rewards are delayed by lead times; and the relevant horizon spans many weeks.

We therefore use Group Relative Policy Optimization (GRPO), which avoids learning an explicit value function. At each training step, GRPO samples a group of trajectories under the current policy and computes a baseline from the realized costs of the group. Trajectories that outperform the baseline are reinforced; trajectories that underperform are discouraged. This relative comparison provides a stable learning signal in a high-dimensional, multi-agent, delayed-feedback environment where value estimation would be unreliable. 

Our GRPO-based framework trains a single shared LLM backbone across the four tiers using system-level rewards, so that the shared model learns how local ordering decisions interact across echelons. After post-training, we deploy independent instances of this shared backbone at each tier, with each agent acting on its own local state but using the shared learned policy. The trained LLM therefore carries a global view of how every tier should behave under system-level cost, even though each deployed instance acts only on local information. This is the centralized-training, decentralized-execution paradigm from multi-agent reinforcement learning: coordination is built into the policy through training rather than into runtime communication or orchestration.

\paragraph{Environment and setup.}
We consider the GenAI Beer Game described in Section \ref{sec:Method}, a four-echelon supply chain consisting of a retailer, wholesaler, distributor, and factory. All four agents share the same LLM policy $\pi_\theta$ but operate at different positions in the supply chain. The system evolves over $T$ weeks. Every week, every LLM-powered agent observes its local state (inventory, backlog, incoming shipments, etc.) and outputs an order quantity.

Training is performed over multiple simulated episodes. Each episode corresponds to one full beer game trajectory of length $T$. For each training step, we run $G$ independent episodes (rollouts) using the current policy.

\paragraph{Demand curriculum.}
To expose the model to diverse dynamics, we train under synthetic demand distributions:
\begin{itemize}
    \item \textbf{Curriculum 1 (Poisson):} Demand is drawn i.i.d. from $\mathrm{Poisson}(\lambda)$, where $\lambda \sim \mathcal{U}(5,20)$ is resampled per episode.
    \item \textbf{Curriculum 2 (Truncated Normal):} Demand is drawn i.i.d. from a truncated normal distribution:
    $D_t \sim \mathrm{TruncNormal}(\mu, \sigma^2; [0,50]), \quad
    \mu \sim \mathcal{U}(8,20), \ \sigma \sim \mathcal{U}(2,6)$.
\end{itemize}
In practice, training may use either distribution or a curriculum that switches between them across training steps.

\medskip\noindent\textbf{Cost structure and reward signals.}
\par\noindent
Let $c_{k,t}$ denote the cost incurred by agent $k \in \{\text{retailer}, \text{wholesaler}, \text{distributor}, \text{factory}\}$ at week $t$. This cost consists of holding and backorder components:
\[
c_{k,t} = c_{\mathrm{hold}} I_{k,t} + c_{\mathrm{back}} B_{k,t}.
\]
Here, $I_{k,t}$ and $B_{k,t}$ denote on-hand inventory and backlog, respectively. Aggregating across agents, the total system cost at week $t$ is $c_{t}^{\mathrm{sys}} = \sum_a c_t^{(a)}$, and the cumulative system cost over an episode across $T$ weeks is
\[
C^{\mathrm{sys}} = \sum_{t=1}^T c_t^{\mathrm{sys}}.
\]
Similarly, each agent incurs a cumulative cost:
\[
C_{k} = \sum_{t=1}^T c_{k,t}.
\]

The reward signal is defined along two dimensions: reward scope and reward attribution. The reward scope determines whether performance is measured at the system level or at the level of individual agents. Under a system-level objective, the reward is given by $r = -C^{\mathrm{sys}}$, so that all agents share a common signal reflecting total supply chain efficiency. Under an agent-level objective, each agent $a$ is evaluated using its own cumulative cost, $r_k = -C_k$, which emphasizes decentralized performance.

The reward attribution determines how costs are assigned to individual decisions over time. Under episode-level attribution, a single scalar reward is assigned uniformly to all actions taken within a trajectory, so that each decision receives the same signal $r$ (or $r_k$). In contrast, under rollout (return-to-go) attribution, each decision taken at week $t$ is assigned the cumulative downstream cost incurred from that point onward:
\[
r_t = -\sum_{\tau=t}^T c_\tau,
\]
where $c_\tau$ corresponds to either $c_\tau^{\mathrm{sys}}$ or $c_{k,\tau}$ depending on the chosen reward scope. This formulation attributes credit to actions based on their long-term impact on future costs, allowing earlier decisions to be evaluated according to the downstream consequences they induce.

\paragraph{Group relative policy optimization (GRPO).} 
Group Relative Policy Optimization (GRPO) is well-suited to the supply-chain setting because it avoids the need to learn a separate value function or critic over a high-dimensional, partially observed, and multi-agent state space. Instead, it uses the group of sampled trajectories as an implicit baseline. At each training step, multiple episodes are generated under the same environment, and their realized costs are compared. Trajectories that achieve lower costs than their peers are reinforced, while those with higher costs are discouraged. This relative evaluation provides a stable learning signal without requiring explicit value estimation.

\paragraph{Advantage construction.}
This comparison is formalized through a group-normalized advantage, which we denote by \(\mathrm{Adv}_{k,t}^{(i)}\), where \(i\) indexes the episode, \(t\) indexes the week, and \(k \in \mathcal{A}\) indexes the agent. Let \(r_{k,t}^{(i)}\) denote the reward assigned to agent \(k\) at week \(t\) in episode \(i\). The advantage is computed by normalizing this reward across the \(G\) episodes collected in the same training step:
\[
\mathrm{Adv}_{k,t}^{(i)}
=
\frac{
r_{k,t}^{(i)} - \frac{1}{G}\sum_{j=1}^G r_{k,t}^{(j)}
}{
\sqrt{
\frac{1}{G}\sum_{j=1}^G
\left(
r_{k,t}^{(j)}
-
\frac{1}{G}\sum_{h=1}^G r_{k,t}^{(h)}
\right)^2
}
+
\varepsilon_{\mathrm{norm}}
}.
\]
The small constant \(\varepsilon_{\mathrm{norm}}>0\) prevents division by zero and is distinct from the decision-shock notation in Section~\ref{sec:theory-bullwhip}. This normalization ensures that learning depends on relative performance within the group rather than the absolute scale of realized costs. For reward-to-go signals, normalization is performed separately for each week and agent, so that a decision is compared only with decisions made by the same agent at the same temporal position across episodes. Episode-level and system-level rewards are recovered as special cases: under episode-level attribution, \(r_{k,t}^{(i)}\) is constant across \(t\), while under system-level scope, the same system reward is shared across agents.

\paragraph{Policy representation.}
The LLM defines a stochastic policy $\pi_\theta$ that maps supply-chain contexts to ordering decisions. Each episode $i$ produces a sequence of weekly decisions $y^{(i)} = (y_{1}^{(i)}, \ldots, y_{T}^{(i)})$, where each $y_{t}^{(i)}$ is a vector of actions across the four agents,
$y_{t}^{(i)}
=
\big(y_{r,t}^{(i)},\, y_{w,t}^{(i)},\, y_{d,t}^{(i)},\, y_{f,t}^{(i)}\big)$.
Each component corresponds to the order quantity placed at week $t$ by the retailer, wholesaler, distributor, and factory, respectively. The four agents make separate decisions based on their local supply-chain contexts. Conditional on these contexts, the joint likelihood factorizes across weeks and agents:
\[
\log \pi_\theta(y_i \mid x)
=
\sum_{t=1}^{T}
\sum_{k \in \mathcal{A}}
\log \pi_\theta\big(y_{k,t}^{(i)} \mid x_{k,t}^{(i)}\big),
\]
where $\mathcal{A} = \{\text{retailer, wholesaler, distributor, factory}\}$ and \(x_{a,t}^{(i)}\) denotes the context available to agent \(a\) at week \(t\) in episode \(i\). Each term is computed from the token-level log-probabilities of the generated response for that agent.

\paragraph{Objective function.}
GRPO updates the shared policy by reinforcing agent-week decisions that perform well relative to comparable decisions in the same training group. The corresponding objective is
\begin{equation}
\begin{aligned}
\mathcal{J}_{\mathrm{GRPO}}(\theta)
=\mathbb{E}
\left[
\frac{1}{G}
\sum_{i=1}^{G}
\frac{1}{T|\mathcal{A}|}\sum_{t=1}^{T}
\sum_{k \in \mathcal{A}}
\mathrm{Adv}_{k,t}^{(i)}
\log \pi_\theta\big(y_{k,t}^{(i)} \mid x_{k,t}^{(i)}\big) -\beta
D_{\mathrm{KL}}(\pi_\theta \| \pi_{\mathrm{ref}})\right].    
\end{aligned}
\end{equation}

Thus, positive advantages increase the likelihood of the corresponding agent decisions, while negative advantages decrease it. Since all agents share the same LLM backbone, gradients from all agents and all weeks are aggregated into a single parameter update. The KL penalty stabilizes training by constraining the updated policy to remain close to the frozen reference model.

\paragraph{Model update.}
Each training step consists of two phases: data collection and policy optimization. First, $G$ beer game episodes are simulated using the current policy with stochastic sampling, producing trajectories of decisions and associated cost signals. Based on the chosen reward scope and attribution scheme, advantages are computed either at the episode level or as reward-to-go signals and normalized across the group.

Second, the model parameters are updated via stochastic gradient ascent on the GRPO objective. In practice, for each trajectory $i$, we compute the mean token-level log-probability and weight it by the corresponding advantage:
\begin{equation}
\label{eq:model_update}
\begin{aligned}
\nabla_\theta \mathcal{J}
\approx
\frac{1}{G T |\mathcal{A}|}
\sum_{i=1}^{G}
\sum_{t=1}^{T}
\sum_{k \in \mathcal{A}}\mathrm{Adv}_{k,t}^{(i)}
\nabla_\theta
\log \pi_\theta
\big(
y_{k,t}^{(i)} \mid x_{k,t}^{(i)}
\big) -
\beta \nabla_\theta
D_{\mathrm{KL}}
\big(
\pi_\theta \| \pi_{\mathrm{ref}}
\big).
\end{aligned}
\end{equation}
Gradients are accumulated across all trajectories, optionally regularized with a KL penalty against a frozen reference model, clipped to control variance, and applied using an optimizer such as AdamW. This procedure iteratively shifts the policy toward generating decisions that achieve lower supply-chain costs relative to competing episodes. 

\begin{remark}[Pooling reward information across agents]
Importantly, the model update pools reward information from all four agents, as shown in equation~\eqref{eq:model_update}. Although agents are deployed independently at different echelons, they share a common base model during training. As a result, the update is not based on the experience of a single facility in isolation; instead, it aggregates learning signals from the retailer, wholesaler, distributor, and factory. This allows the shared policy to internalize how local ordering decisions affect system-wide outcomes across the supply chain, while preserving decentralized execution at deployment.
\end{remark}

\paragraph{Evaluation protocol.}
After training, the learned policy is evaluated on the standard MIT Beer Game demand pattern:
\[
D_t = (4,4,4,4,8,\ldots,8).
\]
Performance is measured over 30 independent simulation runs, and we report the average total supply chain cost as well as per-agent costs.

}{}

\IfFileExists{train_results.tex}{\section{GRPO Post-training Results}\label{sec:Results}

\begin{figure}[ht]
    \centering
    \includegraphics[width=0.85\textwidth]{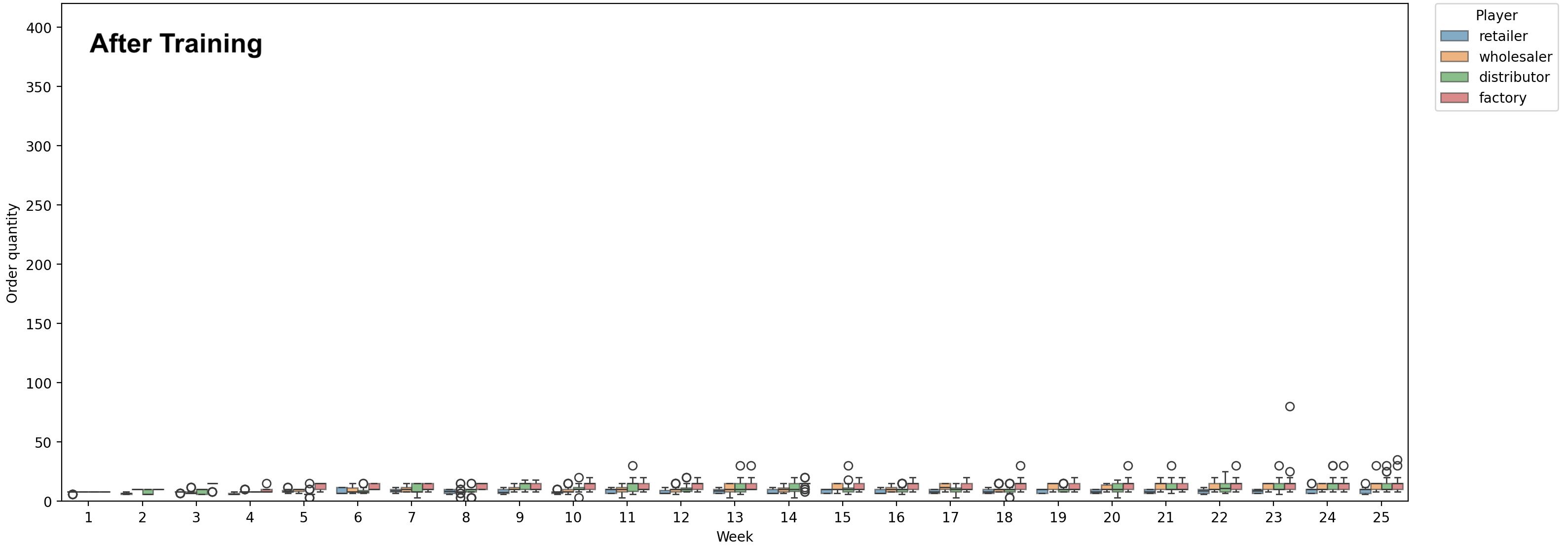}
    \caption{\textbf{Impact of post-training on order reliability.} Post-training significantly compresses decision variance across all facilities and mitigates outlier events. Note: The y-axis scale is held constant with Figures 2 and 3 to facilitate direct comparison.}
    \label{fig:second_order_bullwhip_after_training}
\end{figure}

A central design choice in our GRPO framework is the reward function used for training and model updates. We consider four reward formulations that vary along two dimensions. The first is the level of aggregation: rewards can be based either on total system cost or on the cost incurred by the individual agent. The second is the attribution window: rewards can assign the same episode-level outcome to all actions in an episode, or use rollout-based attribution, where each action is evaluated using the cumulative cost from that week through the end of the game. The latter better captures delayed feedback in supply-chain systems. Among the four formulations, agent-level rollout rewards provide the most granular training signal and yield the strongest empirical performance. We therefore report the main results below using the agent-level rollout reward formulation.

Post-training substantially improves the reliability of LLM agents in inventory management. Across 30 identical runs of the MIT Beer Game under the original demand pattern, the default setting in Figure \ref{fig:second_order_bullwhip_default} exhibits wide interquartile ranges, long whiskers, and numerous extreme tail events in the distribution of orders. By contrast, after post-training the model through reinforcement learning on the training curriculum using the GRPO algorithm described above, Figure~\ref{fig:second_order_bullwhip_after_training} shows that the colored boxes representing the upper and lower percentiles largely disappear. This indicates that variation across repeated runs becomes minimal. The trained model therefore produces much more stable ordering decisions, both over time and across all four echelons of the supply chain.

Equally important, post-training sharply reduces tail risk. In the post-training evaluation, the maximum order observed across all facilities and weeks remains below 100, despite the absence of either an explicit budget constraint or a centralized orchestration layer. This result is especially notable because the earlier analysis showed that out-of-the-box agents often required external guardrails and carefully curated information to prevent panic-induced over-ordering. By contrast, post-training appears to enable the model to internalize a more disciplined and reliable decision policy, thereby reducing the need for external controls.

Figure \ref{fig:supply_chain_costs_trained} extends the analysis from decision behavior to system-level performance by examining total supply chain costs across repeated runs. Consistent with the stabilization in ordering decisions, post-training yields a substantial improvement in both efficiency and robustness. For Qwen-3 4B, average total supply chain cost declines from 1,585 without training to 952 after post-training, while the coefficient of variation falls from 26\% to 13\%. Tail risk also contracts sharply, with the maximum realized cost across 30 identical runs decreasing from 2,847 to 1,353. By comparison, other out-of-the-box models exhibit both higher average costs and greater variability, including GPT-5 mini at 3,927 with a 45\% coefficient of variation and a maximum of 8,644, and Llama 4 Maverick 17B at 4,026 with a 52\% coefficient of variation and a maximum of 8,912. These results indicate that post-training not only lowers mean cost, but also substantially narrows the distribution of outcomes across identical runs and reduces exposure to high-cost realizations.

\begin{figure}[bt]
    \centering
    \includegraphics[width=0.75\textwidth]{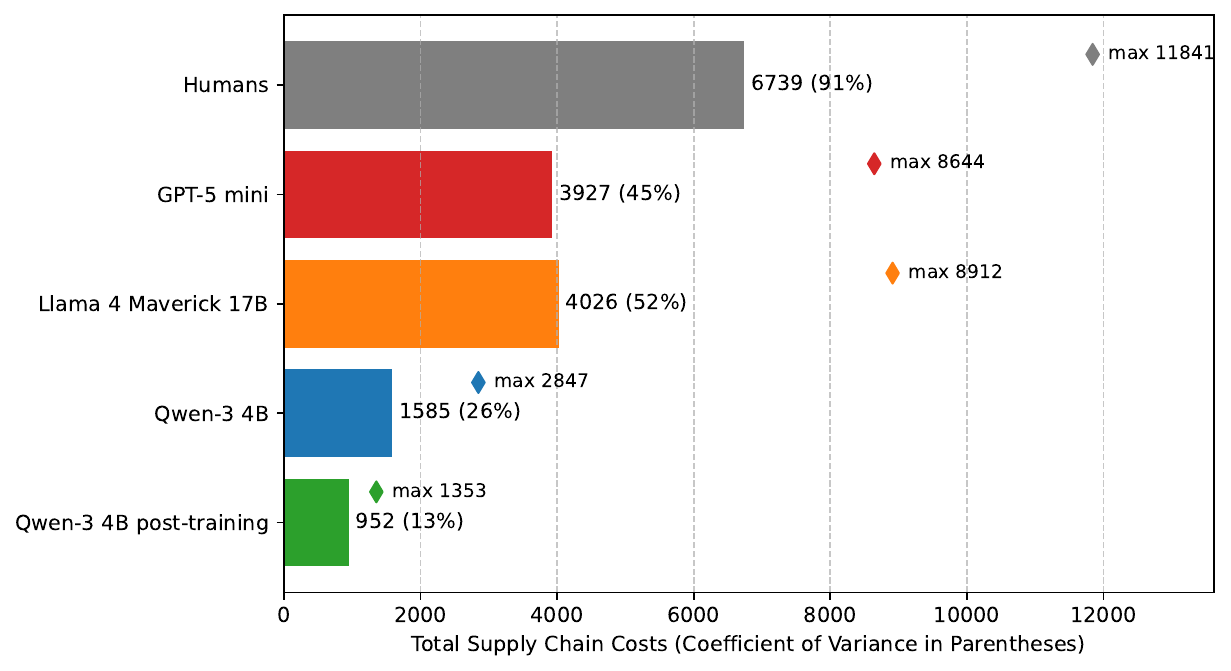}
    \caption{Post-training improves agent reliability across multiple dimensions: it reduces total supply chain costs, lowers variability across repeated runs, and mitigates worst-case outcomes, thereby improving robustness to tail risks.}
    \label{fig:supply_chain_costs_trained}
\end{figure}

These findings shed light on the source of unreliability in autonomous supply chains. The instability we document does not appear to be an inherent consequence of the stochasticity of language models. Rather, it arises from deploying general-purpose models that have not received specialized training for inventory management. Once the agent is exposed to a curriculum of supply chain tasks with synthetic demand and optimized using realized cost feedback, much of the apparent randomness in its supply chain behavior disappears. The trained agent is both more reliable and more efficient: it makes more consistent decisions across runs, exhibits fewer extreme overreactions, and achieves lower average system-wide costs. This suggests that unreliability is, to a considerable extent, a consequence of insufficient domain specialization rather than irreducible stochasticity, and that specialized post-training can improve both the stability and the economic performance of autonomous supply chain agents.
}{}

\section{Conclusion: New Paradigm for Supply Chain Management}\label{sec:Conclusion}
GenAI has brought supply chain management to an inflection point: a fully autonomous supply chain is moving rapidly from theory to practice. Early AI models lacked the reasoning required for complex, strategic supply-chain decisions; modern models have closed that gap.

We show that GenAI agents can match and often exceed human decision-making in planning and replenishment --- state-of-the-art models outperform cohorts of students operating the same system. With the right combination of model selection, prompts, guardrails, and orchestrated information sharing, autonomous agents already achieve strong average performance.

However, out-of-the-box agents remain unreliable in critical ways. They can exhibit high order volatility --- varying across facilities for the same time period and across time for the same facility --- and this is not merely the result of model sampling. Addressing it requires post-training on synthetically generated, task-specific data so agents learn disciplined inventory-management strategies. Our post-training results show substantial gains in reliability, reducing tail risk without sacrificing average performance.

This implies the next frontier is not only better deployment of general-purpose LLMs but the development of specialized AI operations agents that internalize the structure of dynamic inventory control. For firms evaluating AI for planning and replenishment, the difference is material: a system that performs well on average but has unstable tail behavior may be operationally unacceptable, whereas a trained, low volatility agent is much closer to production readiness.

As AI adoption advances, supply-chain leaders will move from hands-on management to strategic orchestration of GenAI agents. Success depends on four levers: selecting the right model, targeted training and guardrails, curated information orchestration, and precise instruction design. Mastering these is the new playbook for autonomous supply chains.

\section*{Acknowledgements}
The paper expands and provides more technical details on the concepts and framework described in a recent article: Long, C., Simchi-Levi, D., Calmon, A. P., \& Calmon, F. P. When supply chains become autonomous. Harvard Business Review \citep{long2025supply}.

This material is based upon work supported by the National Science Foundation
under Grant No FAI 2040880, CIF 2231707, and CIF 2312667. F. P. Calmon and C. Long would also like to acknowledge support from Coefficient Giving and JPMorgan Chase. F.P. Calmon is also affiliated with Google Research as a Visiting Faculty Researcher.

The authors would like to acknowledge support from Harvard Information Theory Lab, MIT Data Science Lab, Microsoft Accelerating AI Academic Research (AAAR) program, the Kempner Institute at Harvard University, and the Ray C. Anderson Center for Sustainable Business at Georgia Tech.



\bibliographystyle{plainnat}
\bibliography{main}

\onecolumn
\appendix
\IfFileExists{appendix.tex}{\section{Additional Information for Section \ref{sec:inference-time_methods_efficiency}}

\subsection{Detailed Experimental Results}
The following tables present the underlying numerical data that support the findings
discussed in the main text. Table~\ref{tab:human_teams_costs} reports the total supply chain costs recorded across eleven runs of the Beer Game played by
human student teams (4-8 students per team, 100+ students in total) from two Georgia Tech cohorts (April 2025 and April 2024),
together with the average cost of \$3,207 that serves as the human performance
benchmark throughout the study. Table~\ref{tab:supply_chain_full} summarises the
aggregate performance of all gen AI configurations tested under the classical Beer Game setting played by students (20-week, 2-2-2 lead times), listing total costs and normalized costs relative to the human benchmark
for each combination of model type and inference-time technique; values below 100 indicate that the agent outperformed the average human team. Table~\ref{tab:supply_chain_analysis} identifies the specific model comparisons underlying the numerical results reported in the main text, together with the corresponding percentage changes in total supply chain costs and differences in the coefficient of variance.


\begin{table}[ht]
    \centering
    \caption{Human Teams: Total Costs per Run}
    \label{tab:human_teams_costs}
        \begin{tabular}{lr}
            \toprule
            \textbf{Human Teams} & \textbf{Total Costs} \\
            \midrule
            4/6/2025, 50 students &   867.5  \\
                                  &   854.0  \\
                                  &  1091.0  \\
                                  &  8784.5  \\
            4/3/2024, 60 students &  1049.5  \\
                                  &   695.5  \\
                                  &  4732.5  \\
                                  &  7182.5  \\
                                  &  6258.5  \\
                                  &  2024.5  \\
                                  &  1735.0  \\
            \midrule
            Average Costs         & 3206.82  \\
            \bottomrule
        \end{tabular}
\end{table}

\begin{table}[ht]
    \centering
    \caption{Supply Chain Performance: 20 Weeks Beer Game}
    \label{tab:supply_chain_full}
    \begin{adjustbox}{width=\textwidth}
        \begin{tabular}{llrr}
            \toprule
            \textbf{Scenario} & \textbf{Model/Human} & \textbf{Total Costs} & \textbf{Normalized Costs} \\
            \midrule
            Human                              & Human Teams                            & 3207  & 100.00  \\
            Reasoning Model                    & GPT-5 mini default                     & 2142  & 66.79   \\
            Reasoning + Orchestrator           & GPT-5 mini demand                      & 1765  & 55.04   \\
            Reasoning + Policy                 & GPT-5 mini budget                      & 1608  & 50.14   \\
            Reasoning + Prompt                 & GPT-5 mini prompt                      & 2620  & 81.70   \\
            Reasoning + Orchestrator + Policy  & GPT-5 mini demand+budget               & 1596  & 49.77   \\
            Reasoning Model                    & Llama 4 Maverick 17B default           & 2080  & 64.86   \\
            Reasoning + Orchestrator           & Llama 4 Maverick 17B demand            & 1559  & 48.61   \\
            Reasoning + Policy                 & Llama 4 Maverick 17B budget            & 1235  & 38.51   \\
            Reasoning + Prompt                 & Llama 4 Maverick 17B prompt            & 1758  & 54.82   \\
            Reasoning + Orchestrator + Policy  & Llama 4 Maverick 17B demand+budget     & 1063  & 33.15   \\
            Non-Reasoning Model                & GPT-4o mini default                    & 7093  & 221.17  \\
            Non-Reasoning + Orchestrator       & GPT-4o mini demand+volatility          & 2171  & 67.70   \\
            Non-Reasoning + Policy             & GPT-4o mini budget                     & 4351  & 135.67  \\
            Non-Reasoning + Prompt             & GPT-4o mini prompt                     & 3956  & 123.36  \\
            Non-Reasoning + Orchestrator       & GPT-4o mini demand                     & 4383  & 136.67  \\
            Non-Reasoning Model                & GPT-4.1 mini default                   & 7093  & 221.17  \\
            Non-Reasoning + Prompt             & GPT-4.1 mini prompt                    & 4734  & 147.61  \\
            Non-Reasoning Model                & Llama 3.3 70B default                  & 11743 & 366.17  \\
            \bottomrule
        \end{tabular}
    \end{adjustbox}
\end{table}

\begin{table}[ht]
    \centering
    \caption{Supply Chain Analysis Summary}
    \label{tab:supply_chain_analysis}
    \begin{adjustbox}{width=\textwidth}
        \begin{tabular}{llrr}
            \toprule
            \textbf{Analysis} & \textbf{Comparison} & \textbf{Numbers} & \textbf{Note} \\
            \midrule
            Human vs AI + Orchestration              & Llama 4 Maverick 17B demand+budget vs Human Teams           & $-67\%$ & Percentage Change in Total Costs \\
            Stability in Runs                        & Min: GPT-4o mini default; Max: Llama 4 Maverick 17B default & 13\%, 46\%  & Coefficient of Variation          \\
            Reasoning vs Non-Reasoning               & GPT-4o mini default vs GPT-5 mini default                   & $-70\%$ & Percentage Change in Total Costs \\
           & Llama 3.3 70B default vs Llama 4 Maverick 17B default       & $-82\%$ & Percentage Change in Total Costs \\
            Effect of Guardrail                      & GPT-5 mini default vs GPT-5 mini budget                     & $-25\%$ & Percentage Change in Total Costs \\
             & GPT-4o mini default vs GPT-4o mini budget                   & $-39\%$ & Percentage Change in Total Costs \\
          & Llama 4 Maverick 17B default vs Llama 4 Maverick 17B budget & $-41\%$ & Percentage Change in Total Costs \\
            Improvement in Stability                 & Llama 4 Maverick 17B default vs Llama 4 Maverick 17B budget & 46\%, 37\%  & Coefficient of Variation          \\
            Orchestration: Demand Sharing            & GPT-5 mini default vs GPT-5 mini demand                     & $-18\%$ & Percentage Change in Total Costs \\
        & Llama 4 Maverick 17B default vs Llama 4 Maverick 17B demand & $-25\%$ & Percentage Change in Total Costs \\
          & GPT-4o mini default vs GPT-4o mini demand                   & $-38\%$ & Percentage Change in Total Costs \\
            Orchestration: Demand Sharing + Analysis & GPT-4o mini default vs GPT-4o mini demand+volatility        & $-69\%$ & Percentage Change in Total Costs \\
            Effect of Prompts                        & GPT-4o mini default vs GPT-4o mini prompt                   & $-44\%$ & Percentage Change in Total Costs \\
        & GPT-4.1 mini default vs GPT-4.1 mini prompt                 & $-33\%$ & Percentage Change in Total Costs \\
            \bottomrule
        \end{tabular}
    \end{adjustbox}
\end{table}

\subsection{LLM Prompts}

\newtcolorbox{promptbox}[1][]{
    enhanced,
    breakable,
    colback=gray!5,
    colframe=gray!60,
    boxrule=0.6pt,
    arc=2pt,
    left=6pt,
    right=6pt,
    top=6pt,
    bottom=6pt,
    fonttitle=\bfseries,
    title=#1
}
\begin{promptbox}[Example Prompt for Retailer]
You are the Retailer in the Beer Distribution Game.
Your objective is to minimize your total supply chain costs by managing your beer inventory efficiently. You receive orders from customers and stock up your inventory from the Wholesaler.
Your only task is to decide, based on your inventory status and incoming order (shown below), how many new cases of beer you want to buy this week.

Here are the costs you face:
- Holding Cost: 0.50 per case per week.
- Backorder Cost: 1.00 per case per week.
- Order Lead Time: 1 week (your order reaches the Wholesaler next week).
- Shipping Lead Time: 2 weeks (your delivery from the Wholesaler arrives 2 weeks after they ship).

**Your Current Situation (Week {week}):**
- Current Inventory: {current_inventory} cases
- Current Backlog: {current_backlog} cases
- Incoming Order from Downstream (Customer Demand): {incoming_order_this_week} cases
- Last Order You Placed: {last_order_placed} cases
- Last Delivery You Received: {last_delivery_received} cases
{pipeline_info}{budget_info}{fixed_cost_info}{order_forecast_info}{feedback_info}

---------------------------\\
Your Task:
Decide how many cases of beer to order from your upstream this week based on your current situation.

Start your response with a JSON object **on its own line** in the following exact format:
{"order_quantity": <number_of_cases>}

Important:
- Replace `<number_of_cases>` with your actual numeric decision.
- Do not add any text, notes, or punctuation after the JSON.
- This will be parsed by a program, so the format must be valid and exact.

Example (your response should end like this):
{"order_quantity": 5}
\end{promptbox}

\IfFileExists{theory.tex}{\section{A Theoretical Model for Agent Bullwhip}
\label{sec:theory-bullwhip}

The section proceeds in three steps. First, we introduce a general operational model that includes on-hand inventory, backlog, outstanding inventory, shipment constraints, nonnegative ordering, and decision shocks. Second, we specialize this model to a linear benchmark system that admits exact transfer-function analysis for both demand and decision shocks. Third, within the linear benchmark system, we analyze agent bullwhip by decomposing order variability into demand-driven and decision-driven components. 

\subsection{A General Operational Inventory Model}
\label{subsec:general-operational-model}

Each tier $k=1,\dots,n$ operates under an order-up-to policy with deterministic lead time $\ell_k\ge 0$. For each tier $k$, define the following state variables:
\begin{itemize}
    \item $OH_{k, t}$: on-hand inventory at the beginning of period $t$,
    \item $B_{k, t}$: backlog owed by tier $k$ to tier $k-1$,
    \item $O_{k, t}$: outstanding inventory, i.e., orders already placed by tier $k$ but not yet received,
    \item $IP_{k, t}$: inventory position.
\end{itemize}

The inventory position is defined as
\begin{equation}
IP_{k, t}
=
OH_{k, t}
+
O_{k, t}
-
B_{k, t},
\label{eq:inventory-position-general}
\end{equation}
where inventory position counts physical inventory currently on hand, plus outstanding inventory, minus outstanding backlog.
\paragraph{Demand prediction and order-up-to-level.} Each tier forms an exponentially smoothed forecast of downstream orders observed through period $t-1$, following the timing convention in \citep{chen2000exponential}:
\begin{equation}
\hat q_{k, t}
=
\lambda_k q_{k-1, t-1}
+
(1-\lambda_k)\hat q_{k,t-1},
\qquad
\lambda_k\in(0,1].
\label{eq:forecast-general}
\end{equation}

The corresponding order-up-to target is
\begin{equation}
S_{k, t}
=
\theta_k\hat q_{k, t}
+
\epsilon_{k, t},
\label{eq:order-up-to-general}
\end{equation}
where $\epsilon_{k, t}$ denotes the tier's decision shock: a safety-stock perturbation, an idiosyncratic residual in the agent policy, or a run-specific interpretation of the same state. $\theta_k$ is a multiplier that controls the inventory target level. Under periodic review, the order-up-to level is typically set to cover expected demand over the protection interval, i.e., the replenishment lead time plus the review period. Hence, when the review period is one period and \(\hat q_{k,t}\) is a one-period demand forecast, a common choice is $\theta_{k}=\ell_k+1$ \citep{silver1998inventory, chen2000exponential}.

Thus, the order placed by tier $k$ is given by:
\begin{equation}
q_{k, t}
=
\left[
S_{k, t}
-
IP_{k, t}
\right]^+,
\label{eq:order-general}
\end{equation}
where $[x]^+ := \max\{x,0\}$. The positive-part operator captures the practical constraint that order quantities cannot be negative.

This timing convention means tier $k$ places its period-$t$ order using information observed through period $t-1$. The current downstream order $q_{k-1,t}$ then enters the inventory-position update after the period-$t$ order is placed. Under this convention, the exact transfer function contains a leading one-period delay. Removing that pure delay gives the classical exponential-smoothing bullwhip filter used for variance-gain calculations, because the lag operator has unit modulus in the frequency domain.

\paragraph{System dynamics.} Let $r_{k, t}$ denote inbound receipts to tier $k$, and let $s_{k, t}$ denote shipments from tier $k$ to tier $k-1$. For $k<n$, receipts at tier $k$ are delayed shipments from tier $k+1$: $r_{k, t} = s_{k+1,t-\ell_k}$. For the most upstream tier, we assume access to an outside supplier with unlimited capacity: $r_{n, t} = q_{n,t-\ell_n}$.

The effective demand faced by tier $k$ is the sum of current downstream orders and existing backlog:
\begin{equation}
\Delta_{k, t}
=
q_{k-1, t}
+
B_{k, t}.
\label{eq:effective-demand-general}
\end{equation}
Available inventory at tier $k$ is
\begin{equation}
A_{k, t}
=
OH_{k, t}
+
r_{k, t}.
\label{eq:available-inventory-general}
\end{equation}
Actual shipments are constrained by available inventory:
\begin{equation}
s_{k, t}
=
\min\left\{
A_{k, t},\Delta_{k, t}
\right\}.
\label{eq:shipment-general}
\end{equation}

The operational state dynamics are therefore
\begin{align}
OH_{k,t+1}
&=
OH_{k, t}
+
r_{k, t}
-
s_{k, t},
\label{eq:oh-update-general}
\\
B_{k,t+1}
&=
B_{k, t}
+
q_{k-1, t}
-
s_{k, t},
\label{eq:backlog-update-general}
\\
O_{k,t+1} & = O_{k, t} + q_{k, t} - r_{k, t}.
\label{eq:pipeline-update-general}
\end{align}


The following proposition shows that the detailed operational state equations imply a simple inventory-position balance.

\begin{proposition}[Inventory-position recursion]
\label{prop:inventory-position-recursion}
Under the operational dynamics above, the inventory position of tier $k$ satisfies
\begin{equation}
IP_{k, t+1}
=
IP_{k, t}
+
q_{k, t}
-
q_{k-1, t}.
\label{eq:linear-ip}
\end{equation}
\end{proposition}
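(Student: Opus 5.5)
The argument is a direct telescoping computation. I would start from the definition \eqref{eq:inventory-position-general} at time $t+1$,
\[
IP_{k,t+1}=OH_{k,t+1}+O_{k,t+1}-B_{k,t+1},
\]
and substitute the three state updates \eqref{eq:oh-update-general}, \eqref{eq:backlog-update-general} and \eqref{eq:pipeline-update-general} term by term. This expresses $IP_{k,t+1}$ as $OH_{k,t}+O_{k,t}-B_{k,t}$, which equals $IP_{k,t}$, plus a collection of flow terms.

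The heart of the proof is to verify that the flow terms collapse to $q_{k,t}-q_{k-1,t}$.

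\textbf{Receipts.} The receipt $r_{k,t}$ enters on-hand inventory with a plus sign and leaves the outstanding pipeline with a minus sign, so it cancels.

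\textbf{Shipments.} The shipment $s_{k,t}$ reduces on-hand inventory, so it contributes $-s_{k,t}$. It also reduces backlog, and since backlog enters $IP$ with a minus sign, this contributes $+s_{k,t}$. The two contributions cancel.

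\textbf{What remains.} The new order $q_{k,t}$ enters the pipeline with a plus sign. The incoming downstream order $q_{k-1,t}$ increases backlog, which enters $IP$ negatively and so contributes $-q_{k-1,t}$. Together these give exactly \eqref{eq:linear-ip}.

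I would remark explicitly that the identity holds regardless of how $s_{k,t}$ is determined. The nonlinear $\min$ in \eqref{eq:shipment-general} never needs to be evaluated, because shipments cancel algebraically. Likewise the identity does not depend on whether receipts come from tier $k+1$ or from the outside supplier ($k=n$), since receipts also cancel. The identity also does not use the order rule \eqref{eq:order-general}, so it holds for any order sequence, including those with nonzero decision shocks.

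The only potential obstacle is bookkeeping: one must check that the backlog update \eqref{eq:backlog-update-general} uses $q_{k-1,t}$ rather than the effective demand $\Delta_{k,t}$, so that backlog is not double-counted. With the updates as written this is consistent, and no further difficulty arises.
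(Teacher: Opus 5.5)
Your proposal is correct and is essentially the paper's proof. Like the paper, you substitute the three state updates into $IP_{k,t+1}=OH_{k,t+1}+O_{k,t+1}-B_{k,t+1}$, note that $r_{k,t}$ and $s_{k,t}$ cancel, and are left with $IP_{k,t}+q_{k,t}-q_{k-1,t}$. Your extra remarks, which the paper does not spell out, are also correct: the identity holds regardless of the shipment rule, the receipt source, and the order rule.
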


The nonlinearity in \eqref{eq:order-general} makes the full model analytically difficult. We therefore next study a linear benchmark model that preserves the central feedback mechanism while allowing closed-form characterization.

\subsection{Linear Benchmark Model}
\label{subsec:linear-benchmark-model}

We now introduce a tractable benchmark system.

\begin{assumption}[Linear benchmark system]
\label{assumption:linear-benchmark}
For the analytical benchmark, assume:
\begin{enumerate}
    \item orders are not truncated at zero;
    \item lead times are deterministic.
\end{enumerate}
\end{assumption}

Under Assumption~\ref{assumption:linear-benchmark}, the ordering rule becomes linear:
\begin{equation}
q_{k, t}
=
\theta_k\hat q_{k, t}
+
\epsilon_{k,t}
-
IP_{k, t},
\label{eq:linear-order}
\end{equation}
where the forecast remains as
\begin{equation}
\hat q_{k, t}
=
\lambda_k q_{k-1, t-1}
+
(1-\lambda_k)\hat q_{k,t-1}.
\label{eq:linear-forecast}
\end{equation}

Let the customer demand \(\{D_t\}_{t\ge 0}\) be i.i.d.\ with mean zero and variance \(\sigma_D^2\), and suppose all initial conditions are deterministic. Since our focus is variance amplification, centering the demand process entails no loss of generality.


The reduced linear benchmark is therefore
\begin{align}
q_{k, t}
&=
\theta_k\hat q_{k, t}
+
\epsilon_{k,t}
-
IP_{k, t},
\label{eq:linear-system-order}
\\
IP_{k,t+1}
&=
IP_{k, t}
+
q_{k, t}
-
q_{k-1, t},
\label{eq:linear-system-ip}
\\
\hat q_{k, t}
&=
\lambda_k q_{k-1, t-1}
+
(1-\lambda_k)\hat q_{k,t-1}.
\label{eq:linear-system-forecast}
\end{align}


\paragraph{The lag operator.}
We work with discrete time series indexed by $t\in\mathbb{Z}$, and use the \emph{lag operator} $\mathcal{L}$ to express linear dynamics compactly. For any time series $x_t$, the lag operator shifts the index back by one period,
\[
    \mathcal{L}\, x_t = x_{t-1}.
\]
Powers of $\mathcal{L}$ iterate this shift, $\mathcal{L}^{j}x_t = x_{t-j}$, and a polynomial (or rational function) of $\mathcal{L}$ acts on $x_t$ in the obvious way; for example, $(1-\mathcal{L})x_t = x_t - x_{t-1}$. Throughout the analysis, equalities involving $\mathcal{L}$ are understood to hold for all $t$.

\paragraph{The order transfer function.} We now derive the transfer representation that maps downstream orders and local decision shocks into the tier-$k$ order.

\begin{proposition}[One-tier transfer function with decision shocks]
\label{prop:one-tier-transfer-function}
Under the linear benchmark system, the order process at tier $k$ satisfies
\begin{equation}
q_{k,t+1}
=
(1+\theta_k\lambda_k)q_{k-1,t}
-
(\theta_k\lambda_k+1-\lambda_k)q_{k-1, t-1}
+
(1-\lambda_k)q_{k,t}
+
\epsilon_{k,t+1}
-
(2-\lambda_k)\epsilon_{k,t}
+
(1-\lambda_k)\epsilon_{k,t-1}.
\label{eq:general-reduced-recursion}
\end{equation}

Equivalently, in lag-operator form,
\begin{equation}
{\bm q}_k
=
H_k(\mathcal L){\bm q}_{k-1}
+
G(\mathcal L){\bm \epsilon}_{k},
\label{eq:general-transfer-form}
\end{equation}
where ${\bm q}_k=\{q_{k,t}\}_{t\ge0}$, ${\bm q}_{k-1}=\{q_{k-1,t}\}_{t\ge0}$, ${\bm \epsilon}_{k}=\{\epsilon_{k,t}\}_{t\ge0}$, $\mathcal L x_t=x_{t-1}$, and
\begin{equation}
H_k(\mathcal L)
=
\frac{
(1+\theta_k\lambda_k)\mathcal L
-
(\theta_k\lambda_k+1-\lambda_k)\mathcal L^2
}{
1-(1-\lambda_k)\mathcal L
}
=
\mathcal L\left[
1+
\frac{\theta_k\lambda_k(1-\mathcal L)}
{1-(1-\lambda_k)\mathcal L}
\right].
\label{eq:general-transfer-function}
\end{equation}
The local decision-shock filter is
\begin{equation}
G(\mathcal L)
=
1-\mathcal L.
\label{eq:decision-shock-transfer-function}
\end{equation}
\end{proposition}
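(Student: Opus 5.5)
The plan is to eliminate the two auxiliary states, the inventory position $IP_{k,t}$ and the forecast $\hat q_{k,t}$, from the linear benchmark \eqref{eq:linear-system-order}--\eqref{eq:linear-system-forecast}. This leaves a single difference equation linking $q_{k,\cdot}$, $q_{k-1,\cdot}$ and $\epsilon_{k,\cdot}$. I then read off the lag-operator form. Throughout I drop the subscript $k$ on $\theta,\lambda$.

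\emph{Step 1 (remove $IP$).} Take the first difference of \eqref{eq:linear-system-order} between $t+1$ and $t$, and substitute the inventory-position balance \eqref{eq:linear-system-ip}. This balance is the content of Proposition~\ref{prop:inventory-position-recursion}. The term $q_{k,t}$ appears on both sides and cancels, leaving
\[
q_{k,t+1}=q_{k-1,t}+\theta(\hat q_{k,t+1}-\hat q_{k,t})+\epsilon_{k,t+1}-\epsilon_{k,t}.
\]
By \eqref{eq:linear-system-forecast}, $\hat q_{k,t+1}-\hat q_{k,t}=\lambda(q_{k-1,t}-\hat q_{k,t})$. So if I set
\[
x_t:=q_{k,t+1}-q_{k-1,t}-\epsilon_{k,t+1}+\epsilon_{k,t},
\]
I obtain $x_t=\theta\lambda\,(q_{k-1,t}-\hat q_{k,t})$.

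\emph{Step 2 (remove $\hat q$).} The forecast recursion reads $(1-(1-\lambda)\mathcal L)\hat q_{k,t}=\lambda q_{k-1,t-1}$. Applying $1-(1-\lambda)\mathcal L$ to $q_{k-1,t}-\hat q_{k,t}$ gives
\[
q_{k-1,t}-(1-\lambda)q_{k-1,t-1}-\lambda q_{k-1,t-1}=(1-\mathcal L)q_{k-1,t}.
\]
Hence
\[
x_t-(1-\lambda)x_{t-1}=\theta\lambda\,(q_{k-1,t}-q_{k-1,t-1}).
\]
Substituting the definition of $x_t$ and collecting terms gives exactly \eqref{eq:general-reduced-recursion}:
\begin{itemize}
\item the coefficient on $q_{k-1,t}$ is $1+\theta\lambda$;
\item the coefficient on $q_{k-1,t-1}$ is $-(\theta\lambda+1-\lambda)$;
\item the term $(1-\lambda)q_{k,t}$ appears;
\item the $\epsilon$ terms combine to $\epsilon_{t+1}-(2-\lambda)\epsilon_t+(1-\lambda)\epsilon_{t-1}$.
\end{itemize}

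\emph{Step 3 (lag form).} Shift the index $t+1\to t$ and write the recursion with $\mathcal L$:
\[
(1-(1-\lambda)\mathcal L)\,q_{k,t}=\big[(1+\theta\lambda)\mathcal L-(\theta\lambda+1-\lambda)\mathcal L^2\big]q_{k-1,t}+\big(1-(2-\lambda)\mathcal L+(1-\lambda)\mathcal L^2\big)\epsilon_{k,t}.
\]
The key observation is that the $\epsilon$-polynomial factors as $(1-\mathcal L)(1-(1-\lambda)\mathcal L)$. Dividing by $1-(1-\lambda)\mathcal L$ therefore yields $G(\mathcal L)=1-\mathcal L$ together with the first expression for $H_k$. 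This division is legitimate as a causal, stable filter because $|1-\lambda|<1$ for $\lambda\in(0,1)$; it is trivial when $\lambda=1$.

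For the second expression for $H_k$, factor out $\mathcal L$ and split the numerator as
\[
(1-(1-\lambda)\mathcal L)+\theta\lambda(1-\mathcal L).
\]
Dividing this by $1-(1-\lambda)\mathcal L$ gives $1+\theta\lambda(1-\mathcal L)/(1-(1-\lambda)\mathcal L)$, as stated.

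The computation is routine. The only delicate point I expect is Step 3. There the operator identity must be interpreted for doubly infinite (or stationary) sequences, since the paper states equalities in $\mathcal L$ hold for all $t$. Equivalently, with deterministic initial conditions the two sides agree up to a transient that decays geometrically at rate $(1-\lambda)^t$. I would state this caveat explicitly, and keep the finite-difference recursion \eqref{eq:general-reduced-recursion} as the exact statement from which the filter form follows.
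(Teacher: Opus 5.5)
Your proposal is correct and follows the paper's proof essentially step for step. It uses the same first-difference identity $q_{k,t+1}=q_{k-1,t}+\theta_k(\hat q_{k,t+1}-\hat q_{k,t})+\epsilon_{k,t+1}-\epsilon_{k,t}$, eliminates the forecast using the forecast recursion at adjacent periods, and then applies the same factorization $1-(2-\lambda_k)\mathcal L+(1-\lambda_k)\mathcal L^2=(1-\mathcal L)(1-(1-\lambda_k)\mathcal L)$ to read off $G$ and $H_k$. Your quasi-differencing of $x_t=\theta_k\lambda_k(q_{k-1,t}-\hat q_{k,t})$ is just a compact version of the paper's step of solving the lagged equation for $\hat q_{k,t-1}$. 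Your caveat about one-sided sequences and the geometrically decaying transient is a sensible clarification that the paper leaves implicit.
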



\subsection{Agent Bullwhip and Decomposition}
\label{subsec:variance-amplification-across-tiers}

We now characterize how the order variance changes as orders propagate upstream. In the stationary benchmark, we drop the time subscript from the variance components and write
\[
V^D_k
:=
\operatorname{Var}_{D}\!\left(\mathbb E_{\epsilon}[q_{k,t}\mid D]\right),
\qquad
V^\epsilon_k
:=
\mathbb E_D\!\left[\operatorname{Var}_{\epsilon}(q_{k,t}\mid D)\right].
\]
To make the two components explicit, define the demand-driven conditional mean
\[
\bar q_{k,t}(D)
:=
\mathbb E_{\epsilon}[q_{k,t}\mid D],
\qquad
\bar{\bm q}_k=\{\bar q_{k,t}(D)\}_{t\ge0},
\]
and the centered decision-driven deviation
\[
x_{k,t}(D)
:=
q_{k,t}-\bar q_{k,t}(D),
\qquad
{\bm x}_{k}=\{x_{k,t}(D)\}_{t\ge0}.
\]
Thus \(V^D_k=\operatorname{Var}_D(\bar q_{k,t}(D))\), while \(V^\epsilon_k=\mathbb E_D[\operatorname{Var}_\epsilon(x_{k,t}(D)\mid D)]\). 


For ease of explanation, we impose the following assumption:
\begin{assumption}[Heterogeneous lower bounds and independent inputs]
\label{assumption:heterogeneous-lower-bounds}
For each tier \(k\), the order-up-to multiplier and smoothing parameter satisfy
\[
    \theta_k\ge \theta>0,
    \qquad
    \lambda_k\in[\lambda,1],
\]
where \(\lambda>0\). The demand process is centered and independent
across time:
\[
    \mathbb E[D_t]=0,
    \qquad
    D_t \perp D_s \quad \text{for } t\neq s,
\]
with
\[
    \operatorname{Var}(D_t)\ge \sigma_D^2>0
    \qquad \text{for all }t.
\]
The decision shocks are centered, independent across tiers and time, and
independent of demand:
\[
    \mathbb E[\epsilon_{k,t}]=0,
    \qquad
    \epsilon_{k,t}\perp \epsilon_{j,s}
    \quad \text{unless } (k,t)=(j,s),
\]
and
\[
    \operatorname{Var}(\epsilon_{k,t})\ge \sigma_k^2>0
    \qquad \text{for all } k,t.
\]
\end{assumption}



\subsubsection{Demand bullwhip}

Taking the conditional expectation of \eqref{eq:general-transfer-form} over \(\epsilon\) gives the demand-channel recursion
\begin{equation}
\bar{\bm q}_k
=
H_k(\mathcal L)\bar{\bm q}_{k-1},
\qquad
\bar{\bm q}_0={\bm D}.
\label{eq:demand-channel-recursion}
\end{equation}
Therefore \(\bar{\bm q}_k=\left(\prod_{r=1}^k H_r(\mathcal L)\right){\bm D}\), with the product ordered from downstream to upstream.







\begin{theorem}[Demand bullwhip]
\label{thm:demand-bullwhip-lower-bounds}
Suppose Assumption~\ref{assumption:heterogeneous-lower-bounds} holds. In the
linear benchmark, the demand-driven component satisfies
\[
V^D_k
\ge
\sigma_D^2\Gamma^k,
\]
where
\[
\Gamma
=
1+
2\theta\lambda
+
\frac{2\theta^2\lambda^2}
{2-\lambda}
>
1.
\]
\end{theorem}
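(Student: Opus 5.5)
The plan is to work entirely in the demand channel \eqref{eq:demand-channel-recursion}. By Assumption~\ref{assumption:heterogeneous-lower-bounds}, $\bar{\bm q}_k$ is a linear filter of independent, centered demand shocks. Its variance is therefore a weighted sum of squared impulse-response coefficients times the variances $\operatorname{Var}(D_{t-j})\ge\sigma_D^2$. The proof then reduces to lower-bounding an $\ell^2$ norm.

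\textbf{Step 1 (one tier, white input).} Using \eqref{eq:general-transfer-function}, I write $H_k(\mathcal L)=\mathcal L\,F_k(\mathcal L)$ with $F_k(z)=1+\theta_k\lambda_k(1-z)/(1-(1-\lambda_k)z)$. The pure delay $\mathcal L$ does not affect variance. Expanding $F_k$ as a geometric series gives coefficients $c_0=1+\theta_k\lambda_k$ and $c_j=-\theta_k\lambda_k^2(1-\lambda_k)^{j-1}$ for $j\ge1$. Hence
\[
\sum_{j\ge0}c_j^2=(1+\theta_k\lambda_k)^2+\frac{\theta_k^2\lambda_k^4}{1-(1-\lambda_k)^2}
=1+2\theta_k\lambda_k+\frac{2\theta_k^2\lambda_k^2}{2-\lambda_k}=:\Gamma(\theta_k,\lambda_k).
\]
This function is increasing in $\theta_k$ and in $\lambda_k\in(0,1]$. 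Under Assumption~\ref{assumption:heterogeneous-lower-bounds} it is therefore at least $\Gamma$, and the same expression shows $\Gamma>1$. Since the $D_t$ are independent with $\operatorname{Var}(D_t)\ge\sigma_D^2$, this already yields $V^D_1\ge\Gamma\sigma_D^2$.

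\textbf{Step 2 (induction across tiers).} For $k\ge2$, the input $\bar{\bm q}_{k-1}$ is no longer white. The gain of the cascade $\prod_r F_r$ is $\frac{1}{2\pi}\int\prod_r|F_r(e^{i\omega})|^2d\omega$, which is not in general the product of the individual gains. Indeed $|F_r(1)|=1$, so no pointwise frequency bound $|F_r|^2\ge\Gamma$ is available. I would therefore argue in the time domain via the innovation decomposition:
\[
\bar q_{k,t}=c_0^{(k)}\bar q_{k-1,t-1}+\sum_{j\ge1}c_j^{(k)}\bar q_{k-1,t-1-j}.
\]
Expanding the variance produces the diagonal sum $\sum_j (c_j^{(k)})^2\,\operatorname{Var}(\bar q_{k-1})$, which is at least $\Gamma\,\operatorname{Var}(\bar q_{k-1})$. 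It remains to show that the cross terms $2\sum_{i<j}c_i^{(k)}c_j^{(k)}\operatorname{Cov}(\bar q_{k-1,t-1-i},\bar q_{k-1,t-1-j})$ are nonnegative.

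The cross terms split into two groups. Pairs with $i=0<j$ carry a negative coefficient $c_0c_j$; pairs with $0<i<j$ carry a positive coefficient. I would compute the autocovariance of $\bar{\bm q}_{k-1}$ from the same coefficient structure, carried inductively. Each $F_r$ has the form (positive lag-$0$ tap) $-$ (geometric positive tail), so the lag-$j$ autocovariance of the output should be nonpositive for every $j\ge1$. Then the $i=0$ terms become nonnegative. The $i,j\ge1$ terms, weighted by $c_ic_j>0$, need a separate estimate, and I would aim to show that the total nonnegative mass from the $i=0$ terms dominates them. Proving this sign and dominance pattern inductively, and checking that it is preserved under composition of heterogeneous filters, is the step I expect to be the main obstacle.

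\textbf{Fallback.} If the cross-term control fails in general, I would track the coefficient sequence of the composite filter $\prod_r F_r$ directly. It has leading coefficient $\prod_r(1+\theta_r\lambda_r)$ followed by a nonpositive tail, and I would try to bound its squared $\ell^2$ norm below by $\prod_r\Gamma(\theta_r,\lambda_r)\ge\Gamma^k$ by induction on $k$, treating one extra factor $F_{k}$ at a time.
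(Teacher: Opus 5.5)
Your Step 1 is correct. It is exactly the paper's one-tier computation of $\Gamma_k$ (impulse response of $H_k$, Parseval, monotonicity in $\theta_k,\lambda_k$), so it settles $k=1$.

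The gap is Step 2, which is the whole content of the theorem for $k\ge2$. You leave it open, and the invariants that both of your routes try to propagate are false. Take $\lambda_1=\lambda_2=1$, so $F_r(z)=1+\theta_r-\theta_r z$.
\begin{itemize}
\item \emph{Fallback.} The product $F_1F_2$ has coefficient $+\theta_1\theta_2$ at lag $2$, so the composite filter does not have a nonpositive tail. The same happens for $\lambda<1$: for $m\ge2$ the lag-$m$ coefficient of $F^2$ is $\theta\lambda^2(1-\lambda)^{m-2}[(m-1)\theta\lambda^2-2(1+\theta\lambda)(1-\lambda)]$, which turns positive for large $m$.
\item \emph{Main route.} Under i.i.d.\ demand, the lag-$2$ autocovariance of $\bar q_{2,t}$ is $\sigma_D^2(1+\theta_1)(1+\theta_2)\theta_1\theta_2>0$. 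So your inductive hypothesis, that every output autocovariance at lag $j\ge1$ is nonpositive, already fails when you pass from tier $2$ to tier $3$.
\end{itemize}
Your cross terms are nonnegative only in aggregate, not term by term. As you note, the pairs with $1\le i<j$ contribute negatively already at $k=2$, so sign bookkeeping on individual covariances cannot close the induction. Inducting tier by tier also makes you treat the colored input $\bar q_{k-1}$ as stationary, which $\operatorname{Var}(D_t)\ge\sigma_D^2$ does not guarantee.

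The missing idea is that you need comonotonicity, not a pointwise bound $g_r\ge\Gamma_r$ (you are right that $g_r(0)=1$ rules the latter out). Here $\Gamma_r$ is your $\Gamma(\theta_r,\lambda_r)$. The paper's proof has three steps.
\begin{enumerate}
\item Compose first. Write $\bar q_k=A_k(\mathcal L)D$ with $A_k=\prod_{r\le k}H_r$, and apply the independent-input bound once, to the demand itself. This gives $V^D_k\ge\frac{\sigma_D^2}{2\pi}\int_{-\pi}^{\pi}\prod_{r=1}^k g_r(\omega)\,d\omega$ with $g_r=|H_r(e^{-i\omega})|^2$, and it needs no stationarity.
\item Observe the common monotone structure. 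The closed form
\[
g_r(\omega)=1+\frac{2\theta_r\lambda_r(2-\lambda_r+\theta_r\lambda_r)\,u}{\lambda_r^2+2(1-\lambda_r)\,u},\qquad u=1-\cos\omega,
\]
shows that every $g_r$ is a nonnegative nondecreasing function of the \emph{same} scalar $u$.
\item Apply Chebyshev's association inequality repeatedly, with $\omega$ uniform on $[-\pi,\pi]$. For comonotone $f,h$ and an independent copy $U'$, $\operatorname{Cov}(f(U),h(U))=\frac12\mathbb E[(f(U)-f(U'))(h(U)-h(U'))]\ge0$. Iterating gives $\frac{1}{2\pi}\int\prod_r g_r\ge\prod_r\Gamma_r\ge\Gamma^k$.
\end{enumerate}
This comonotonicity is also the real reason your cross terms are nonnegative. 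For a stationary input whose spectral density $S$ is nondecreasing in $u$ (as $\sigma_D^2\prod_{r<k}g_r$ is), the cross terms sum to $\frac{1}{2\pi}\int g_kS-\Gamma_k\frac{1}{2\pi}\int S\ge0$. That inequality only holds after summing over all lags, which is why a lag-by-lag argument cannot find it.
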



When $\theta_k=\theta$, $\lambda_k=\lambda$, and demands are i.i.d.\ with variance $\sigma_D^2$, Theorem \ref{thm:demand-bullwhip-lower-bounds} gives the common-parameter white-noise benchmark. The theorem shows the demand bullwhip: uncertainty in customer demand is amplified as it moves upstream through the replenishment rule, and the demand-driven component grows at least exponentially in the tier index. This bullwhip effect is consistent with those documented in the literature \citep{chen2000quantifying, chen2000exponential}.

\subsubsection{Decision Bullwhip} 

We next isolate the decision-bullwhip component. Conditioning on \(D=d\) removes demand randomness, so the only remaining variation is the agent's run-to-run decision uncertainty. The customer tier has no decision shock, so \(V^\epsilon_0=0\).

Subtracting \eqref{eq:demand-channel-recursion} from \eqref{eq:general-transfer-form} gives the decision-channel recursion
\begin{equation}
{\bm x}_{k}
=
H_k(\mathcal L){\bm x}_{k-1}
+
G(\mathcal L){\bm \epsilon}_{k},
\qquad
{\bm x}_{0}=0.
\label{eq:decision-channel-recursion}
\end{equation}
Thus the decision component is built up from local shocks injected through \(G\) and then propagated upstream by the tier-specific filters \(H_k\).

\begin{theorem}[Decision bullwhip]
\label{thm:decision-bullwhip-lower-bounds}
Suppose Assumption~\ref{assumption:heterogeneous-lower-bounds} holds. In the
linear benchmark model, the decision-driven component satisfies
\begin{equation}
V^\epsilon_k
\ge
2\sum_{j=1}^k
\sigma_j^2
\Gamma^{k-j},
\label{eq:decision-variance-tier-specific-lower-bound}
\end{equation}
where
\[
\Gamma
=
1+
2\theta\lambda
+
\frac{2\theta^2\lambda^2}
{2-\lambda} > 1.
\]




\end{theorem}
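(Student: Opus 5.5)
The plan is to unroll the decision-channel recursion \eqref{eq:decision-channel-recursion} explicitly,
\[
{\bm x}_k=\sum_{j=1}^k\Big(\prod_{r=j+1}^{k}H_r(\mathcal L)\Big)G(\mathcal L){\bm \epsilon}_j ,
\]
and use independence of the shocks across tiers (Assumption~\ref{assumption:heterogeneous-lower-bounds}). Since the $\epsilon_j$ are independent across $j$, the $k$ summands are uncorrelated. Hence $V^\epsilon_k$ is the sum of the variances of the individual filtered shock sequences. It therefore suffices to show, for each $j$, that the variance of $\big(\prod_{r>j}H_r\big)G\,{\bm\epsilon}_j$ is at least $2\sigma_j^2\Gamma^{k-j}$.

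\textbf{Step 1: the factor $2$ from $G$.} The sequence ${\bm y}_j=G(\mathcal L){\bm\epsilon}_j$ has $y_{j,t}=\epsilon_{j,t}-\epsilon_{j,t-1}$, so $\operatorname{Var}(y_{j,t})\ge 2\sigma_j^2$. However, ${\bm y}_j$ is no longer white noise: it has negative lag-one autocorrelation. So I cannot simply feed it into the white-noise gain computation that underlies Theorem~\ref{thm:demand-bullwhip-lower-bounds}. I will work in the frequency domain with the spectral density. For (the minimal-variance stationary version of) white noise, the spectrum of ${\bm y}_j$ is $\sigma_j^2|1-e^{i\omega}|^2=2\sigma_j^2(1-\cos\omega)$. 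Its average over $\omega$ is $2\sigma_j^2$. For nonstationary variances bounded below, I will write $\epsilon_{j,t}=\tilde\epsilon_{j,t}+\eta_{j,t}$ with $\operatorname{Var}(\tilde\epsilon_{j,t})=\sigma_j^2$ and $\eta$ independent. This is possible by splitting the variance, or it suffices to note that the variance of a linear filter output is a sum of squared coefficients times the input variances, which is monotone in each input variance. I would use the latter monotonicity argument directly in the time domain.

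\textbf{Step 2: the gain of each $H_r$.} From \eqref{eq:general-transfer-function}, $|H_r(e^{i\omega})|^2=|1+\theta_r\lambda_r(1-z)/(1-(1-\lambda_r)z)|^2$ with $z=e^{i\omega}$, since the pure delay has unit modulus. The white-noise computation behind Theorem~\ref{thm:demand-bullwhip-lower-bounds} shows that the average of this quantity over $\omega$ is at least $\Gamma$ (the classical gain $1+2\theta\lambda+2\theta^2\lambda^2/(2-\lambda)$). That result is monotone in $\theta$, and I would also need it at $\lambda_r\ge\lambda$.

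\textbf{Main obstacle.} The difficulty is that for a non-white input, the variance of the output equals $\frac{1}{2\pi}\int S_{\rm in}(\omega)\prod_r|H_r|^2d\omega$. This integral does not factor into a product of averages. What I need is a pointwise statement. I would show $|H_r(e^{i\omega})|^2\ge 1$ for every $\omega$: writing $w=(1-z)/(1-(1-\lambda_r)z)$, one checks that $\mathrm{Re}\,w\ge0$. With $\mathrm{Re}\,w\ge 0$, $|1+\theta_r\lambda_r w|^2\ge1+2\theta_r\lambda_r\mathrm{Re}\,w+\theta_r^2\lambda_r^2|w|^2$ is increasing in $\theta_r\lambda_r$.

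The plan is then to establish the inequality $\frac{1}{2\pi}\int (1-\cos\omega)\prod_r|H_r|^2\,d\omega\ge\Gamma^{k-j}$. I would first try a Chebyshev/Harris-type association argument, since $1-\cos\omega$ and each $|H_r|^2$ (via $|1-z|^2$ terms) are increasing in $|1-z|$ on $[0,\pi]$. Under that argument, the integral of the product of co-monotone functions is at least the product of their integrals, which gives $1\cdot\Gamma^{k-j}$ after the parameter monotonicity reduces to $(\theta,\lambda)$. If co-monotonicity of $|H_r|^2$ in $\omega$ fails for some parameters, I would instead fall back to a direct time-domain impulse-response calculation.

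Combining Steps 1–2 over $j$ then yields \eqref{eq:decision-variance-tier-specific-lower-bound}.
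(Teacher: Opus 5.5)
Your proposal is correct and follows essentially the same route as the paper. Both arguments unroll the decision-channel recursion, use independence across tiers, bound each filtered term by $\sigma_j^2\sum_m a_m^2$ together with Parseval, and then apply the Chebyshev (co-monotone) inequality in $u=1-\cos\omega$ to split the average of $|G|^2\prod_r g_r$ into $2\prod_r\Gamma_r\ge 2\Gamma^{k-j}$. The co-monotonicity you left open holds for all admissible parameters, so you do not need the fallback: the paper's one-tier gain lemma gives $g_r(\omega)=1+\frac{2\theta_r\lambda_r(2-\lambda_r+\theta_r\lambda_r)u}{\lambda_r^2+2(1-\lambda_r)u}$, which is nondecreasing in $u$.
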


Theorem \ref{thm:decision-bullwhip-lower-bounds} characterizes the decision bullwhip: even when the demand path is fixed, local agent-level decision noise accumulates across tiers and is amplified by the same upstream feedback loop that drives the classical demand bullwhip. In the common-parameter benchmark with comparable decision-shock variances, the lower bound becomes a geometric accumulation term; for example, if \(\sigma_j^2=\sigma_\epsilon^2\) for all \(j\), then
\[
V^\epsilon_k
\ge
2\sigma_\epsilon^2
\sum_{m=0}^{k-1}\Gamma^m.
\]

\begin{corollary}[Exponential growth from any downstream decision noise]
\label{cor:decision-bullwhip-single-source}
Suppose Assumption~\ref{assumption:heterogeneous-lower-bounds} holds. In the
linear benchmark model, if there exists a fixed tier \(j_0\) such that
\[
\sigma_{j_0}^2>0,
\]
then, for all \(k\ge j_0\),
\[
V^\epsilon_k
\ge
2\sigma_{j_0}^2\Gamma^{k-j_0}.
\]
\end{corollary}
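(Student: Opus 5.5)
This corollary follows directly from Theorem~\ref{thm:decision-bullwhip-lower-bounds}, so the plan is simply to specialize that lower bound and discard terms. Fix $k\ge j_0$. Theorem~\ref{thm:decision-bullwhip-lower-bounds} gives
\[
V^\epsilon_k \ge 2\sum_{j=1}^{k}\sigma_j^2\Gamma^{k-j}.
\]
Every summand is nonnegative, since $\sigma_j^2\ge 0$ and $\Gamma>1>0$. Hence the sum is at least any single one of its terms. Since $1\le j_0\le k$, the index $j=j_0$ appears in the sum, and keeping only that term yields $V^\epsilon_k\ge 2\sigma_{j_0}^2\Gamma^{k-j_0}$. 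Because $\Gamma>1$, this lower bound grows geometrically in $k-j_0$, which is the claimed exponential growth.

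The one point needing care is that Assumption~\ref{assumption:heterogeneous-lower-bounds} requires $\sigma_k^2>0$ for all $k$. The corollary, however, is meant to apply when only tier $j_0$ is guaranteed to carry noise. I would therefore note that the proof of Theorem~\ref{thm:decision-bullwhip-lower-bounds} uses the positivity of the $\sigma_j^2$ only to obtain lower bounds term by term. That proof rests on independence of the shocks across tiers and on the decision-channel recursion \eqref{eq:decision-channel-recursion}, which expresses ${\bm x}_k$ as a sum of filtered independent shock streams. Its inequality therefore remains valid with $\sigma_j^2=0$ allowed for $j\neq j_0$.

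If one prefers not to reopen that proof, an alternative is to argue directly from the recursion. By independence,
\[
V^\epsilon_k \ge \operatorname{Var}\!\Bigl(\prod_{r=j_0+1}^{k}H_r(\mathcal L)\,G(\mathcal L)\,{\bm\epsilon}_{j_0}\Bigr).
\]
Then apply the per-tier variance gain of at least $\Gamma$, exactly as in Theorem~\ref{thm:demand-bullwhip-lower-bounds}, together with the factor $2$ contributed by $G(\mathcal L)=1-\mathcal L$ acting on white noise. This second route is the only step with real content, and I would expect it to be the main obstacle if pursued. Under the assumption as stated, though, the one-line term-dropping argument suffices.
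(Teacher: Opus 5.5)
Your proof is correct and follows the paper's route: the paper gives no separate proof of Corollary~\ref{cor:decision-bullwhip-single-source}, which is Theorem~\ref{thm:decision-bullwhip-lower-bounds} with every summand except $j=j_0$ discarded, exactly as in your first paragraph. Your side remark is also right: under Assumption~\ref{assumption:heterogeneous-lower-bounds} the hypothesis $\sigma_{j_0}^2>0$ holds automatically, and the theorem's proof goes through unchanged if $\sigma_j^2=0$ is allowed for $j\neq j_0$, since the term-by-term variance bounds then hold trivially.
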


Corollary \ref{cor:decision-bullwhip-single-source} suggests that any nonzero decision noise source at a fixed downstream tier generates exponential variance growth as it propagates upstream.


\subsubsection{Discussion}



\paragraph{Accumulation of decision noise.} Our results also imply that decision bullwhip can dominate in regimes where agents are tuned for prediction stability. When the smoothing parameter \(\lambda\) is small, the one-tier gain satisfies
\[
    \Gamma
    =
    1+2\theta\lambda+O(\lambda^2),
\]
so demand-driven amplification can be relatively mild over a moderate number of tiers. By contrast, the decision-driven component still accumulates across tiers. If the decision variances are of comparable magnitude, for example
\(\sigma_j^2\asymp \sigma_\epsilon^2\), then
\[
    V^\epsilon_k
    \;\gtrsim\;
    2\sigma_\epsilon^2
    \sum_{m=0}^{k-1}
    \Gamma^m.
\]
In the small-\(\lambda\) regime, this behaves approximately like \(2\sigma_\epsilon^2 k\) over moderate lead times. Thus even when the propagation gain is close to one, residual agent-level randomness can accumulate across the
supply chain and become a major source of order variability.

These results explain why repeated sampling is an incomplete remedy. Majority
voting or best-of-\(n\) sampling may reduce the local decision variance
\(\sigma_j^2\), but unless it drives this variance very close to zero, the remaining
decision noise continues to enter the feedback system and propagate upstream.
The structural source of the agent bullwhip is therefore not merely stochasticity at a single decision point; it is the \emph{interaction} between residual
decision variability, lead times, information delays, and decentralized
replenishment decisions inherent in autonomous supply chains.


\paragraph{Fixed-tier accumulation over time.}
Our results describe how decision variability grows across tiers. A complementary time-domain implication is that, for a fixed facility, decision unreliability also accumulates over time in the finite-horizon linear benchmark.

\begin{proposition}[Intertemporal accumulation of decision unreliability]
\label{result:temporal-decision-bullwhip}
Fix a demand path $D=d$ and consider the finite-horizon linear benchmark
initialized from deterministic initial conditions, with zero shock pre-history.
Assume the decision shocks are centered, independent across tiers and time,
and satisfy
\[
    \operatorname{Var}(\epsilon_{j,t})=\sigma_j^2<\infty.
\]
For a fixed facility $k$, define
\[
    W_{k,t}(d)
    :=
    \operatorname{Var}(q_{k,t}\mid D=d).
\]
Then
\[
    W_{k,t+1}(d)\ge W_{k,t}(d)
    \qquad\text{for all }t.
\]
\end{proposition}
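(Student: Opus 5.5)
The plan is to condition on $D=d$, write the order $q_{k,t}$ as a deterministic term plus a causal, time-invariant linear filter of the decision shocks, and show that $W_{k,t}(d)$ is a partial sum of nonnegative terms indexed by the horizon. Under Assumption~\ref{assumption:linear-benchmark}, the system \eqref{eq:linear-system-order}--\eqref{eq:linear-system-forecast} is linear with constant coefficients. Given $D=d$ and deterministic initial conditions, every state variable $(q_{k,t},IP_{k,t},\hat q_{k,t})$ is therefore an affine function of the shocks $\{\epsilon_{j,s}\}$.

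First I would separate the two parts of the order using linearity and superposition: $q_{k,t}=\bar q_{k,t}(d)+x_{k,t}$. Here $\bar q_{k,t}(d)$ solves the recursion with all shocks set to zero; it absorbs $d$ and the initial conditions and is deterministic. The remainder $x_{k,t}$ solves the same recursion driven only by the shocks, with zero initial state and zero shock pre-history, so it matches the decision-channel recursion \eqref{eq:decision-channel-recursion}.

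Next I would write $x_{k,t}$ as a convolution. Let $a^{(k,j)}_m$ denote the response of tier $k$ at lag $m$ to a unit shock at tier $j$. These are the impulse-response coefficients of the causal filter $\big(\prod_{r=j+1}^{k}H_r(\mathcal L)\big)G(\mathcal L)$, obtained by iterating Proposition~\ref{prop:one-tier-transfer-function}, and they vanish for $j>k$. Because the system is time-invariant and shocks enter only from time $0$ onward, I expect
\[
x_{k,t}=\sum_{j=1}^{k}\sum_{m=0}^{t} a^{(k,j)}_m\,\epsilon_{j,t-m},
\]
where the same coefficients $a^{(k,j)}_m$ appear for every $t$. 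Since the shocks are centered and independent, with $\operatorname{Var}(\epsilon_{j,s})=\sigma_j^2$ constant in $s$,
\[
W_{k,t}(d)=\sum_{j=1}^{k}\sigma_j^2\sum_{m=0}^{t}\big(a^{(k,j)}_m\big)^2 .
\]
Passing from $t$ to $t+1$ adds exactly one new lag for each source tier. This gives
\[
W_{k,t+1}(d)-W_{k,t}(d)=\sum_{j=1}^{k}\sigma_j^2\big(a^{(k,j)}_{t+1}\big)^2\ge 0 .
\]

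The main obstacle is justifying the convolution step in the finite-horizon setting. I need the response to a shock injected at time $s$ to depend only on $t-s$, with the same coefficients regardless of $s$. This requires three things: the zero shock pre-history; the fact that deterministic initial conditions affect only $\bar q$; and the fact that the inventory-position and forecast states carried forward are themselves linear in past shocks with shift-invariant coefficients. To establish this, I would argue directly from the state-space form rather than through the rational transfer function. I would stack the states $(IP_{k,t},\hat q_{k,t})$ for all tiers into a vector $z_t$, giving $z_{t+1}=Az_t+B\epsilon_t+Cd_t$ with $q_t=Fz_t+\epsilon_t+Ed_t$. The impulse responses are then $a_0=I$ and $a_m=FA^{m-1}B$ for $m\ge1$, which are visibly independent of the injection time. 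That makes the variance formula exact for every finite $t$.

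The argument uses only that the variances are constant in time. If they merely satisfied $\operatorname{Var}(\epsilon_{j,s})=\sigma_{j,s}^2$, the same computation would need $\sigma_{j,s}$ nondecreasing in $s$. This is why the statement fixes $\sigma_j^2$.
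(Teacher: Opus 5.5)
Your proposal is correct and follows the paper's proof essentially step for step: you subtract the deterministic conditional mean, expand the decision-channel filters $\bigl(\prod_{r=j+1}^{k}H_r(\mathcal L)\bigr)G(\mathcal L)$ into impulse-response coefficients truncated at the horizon by the zero shock pre-history, and read off $W_{k,t+1}(d)-W_{k,t}(d)=\sum_{j=1}^{k}\sigma_j^2\bigl(a^{(k,j)}_{t+1}\bigr)^2\ge 0$. The differences are cosmetic: your lag sum runs over $m=0,\dots,t$ rather than the paper's $m=0,\dots,t-1$ (a convention for when shocks start), and your state-space realization with $a_0=I$, $a_m=FA^{m-1}B$ makes explicit the shift-invariance of the finite-horizon response, which the paper simply asserts.
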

}{}

\section{Proof of Section \ref{sec:theory-bullwhip}}

\begin{proof}[Proof of Proposition \ref{prop:inventory-position-recursion}]
By definition,
\begin{equation}
IP_{k,t}
=
OH_{k,t}
+
O_{k,t}
-
B_{k,t}.
\end{equation}

Therefore,
\begin{equation}
IP_{k,t+1}
=
OH_{k,t+1}
+
O_{k,t+1}
-
B_{k,t+1}.
\end{equation}

Using the operational state equations,
\begin{align}
OH_{k,t+1}
&=
OH_{k,t}
+
r_{k,t}
-
s_{k,t},
\\
O_{k,t+1}
&=
O_{k,t}
+
q_{k,t}
-
r_{k,t},
\\
B_{k,t+1}
&=
B_{k,t}
+
q_{k-1,t}
-
s_{k,t}.
\end{align}

Substituting these three equations into the definition of $IP_{k,t+1}$ gives
\begin{align}
IP_{k,t+1}
&=
\left(
OH_{k,t}
+
r_{k,t}
-
s_{k,t}
\right)
+
\left(
O_{k,t}
+
q_{k,t}
-
r_{k,t}
\right)
-
\left(
B_{k,t}
+
q_{k-1,t}
-
s_{k,t}
\right).
\end{align}

Expanding terms,
\begin{align}
IP_{k,t+1}
&=
OH_{k,t}
+
r_{k,t}
-
s_{k,t}
+
O_{k,t}
+
q_{k,t}
-
r_{k,t}
-
B_{k,t}
-
q_{k-1,t}
+
s_{k,t}.
\end{align}

The receipt terms $r_{k,t}$ cancel, and the shipment terms $s_{k,t}$ also cancel. Hence
\begin{align}
IP_{k,t+1}
&=
OH_{k,t}
+
O_{k,t}
-
B_{k,t}
+
q_{k,t}
-
q_{k-1,t}.
\end{align}

Using the definition of inventory position,
\begin{equation}
OH_{k,t}
+
O_{k,t}
-
B_{k,t}
=
IP_{k,t}.
\end{equation}

Therefore,
\begin{equation}
IP_{k,t+1}
=
IP_{k,t}
+
q_{k,t}
-
q_{k-1,t}.
\end{equation}

This proves the claim.
\end{proof}

\begin{proof}[Proof of Proposition \ref{prop:one-tier-transfer-function}]
Let \(a_k:=1-\lambda_k\), \(u_t:=q_{k-1,t}\), \(y_t:=q_{k,t}\), \(f_t:=\hat q_{k,t}\), and \(\epsilon_t:=\epsilon_{k,t}\). The linear order rule gives
\[
IP_{k,t}
=
\theta_k f_t+\epsilon_t-y_t.
\]
Using the inventory-position recursion,
\[
IP_{k,t+1}
=
\theta_k f_t+\epsilon_t-u_t.
\]
Applying the order rule one period forward yields
\begin{equation}
y_{t+1}
=
\theta_k(f_{t+1}-f_t)
+
u_t
+
\epsilon_{t+1}-\epsilon_t.
\label{eq:epsilon-order-difference-proof}
\end{equation}
Since \(f_{t+1}=\lambda_k u_t+a_k f_t\),
\[
f_{t+1}-f_t
=
\lambda_k(u_t-f_t).
\]
Thus
\begin{equation}
y_{t+1}
=
(1+\theta_k\lambda_k)u_t
-
\theta_k\lambda_k f_t
+
\epsilon_{t+1}-\epsilon_t.
\label{eq:epsilon-order-with-forecast-proof}
\end{equation}

It remains to eliminate \(f_t\). Applying \eqref{eq:epsilon-order-difference-proof} one period earlier gives
\[
y_t
=
\theta_k(f_t-f_{t-1})
+
u_{t-1}
+
\epsilon_t-\epsilon_{t-1}.
\]
Using \(f_t-f_{t-1}=\lambda_k(u_{t-1}-f_{t-1})\), we obtain
\[
\theta_k\lambda_k f_{t-1}
=
\theta_k\lambda_k u_{t-1}
+
u_{t-1}
+
\epsilon_t-\epsilon_{t-1}
-
y_t.
\]
The forecast recursion \(f_t=\lambda_k u_{t-1}+a_k f_{t-1}\) then implies
\begin{align}
\theta_k\lambda_k f_t
&=
\theta_k\lambda_k^2u_{t-1}
+
a_k\theta_k\lambda_k f_{t-1} \\
&=
(\theta_k\lambda_k+a_k)u_{t-1}
+
a_k(\epsilon_t-\epsilon_{t-1})
-
a_k y_t.
\label{eq:epsilon-forecast-eliminated-proof}
\end{align}
Substituting \eqref{eq:epsilon-forecast-eliminated-proof} into \eqref{eq:epsilon-order-with-forecast-proof} gives
\[
y_{t+1}
=
(1+\theta_k\lambda_k)u_t
-
(\theta_k\lambda_k+a_k)u_{t-1}
+
a_k y_t
+
\epsilon_{t+1}
-
(1+a_k)\epsilon_t
+
a_k\epsilon_{t-1}.
\]
Returning to the original notation and using \(a_k=1-\lambda_k\) proves \eqref{eq:general-reduced-recursion}.

Writing the recurrence at time \(t\) in lag-operator form gives
\[
\left[1-a_k\mathcal L\right]q_{k,t}
=
\left[(1+\theta_k\lambda_k)\mathcal L-(\theta_k\lambda_k+a_k)\mathcal L^2\right]q_{k-1,t}
+
\left[1-a_k\mathcal L\right](1-\mathcal L)\epsilon_{k,t}.
\]
Dividing by \(1-a_k\mathcal L\) gives
\[
q^{(k)}
=
H_k(\mathcal L)q^{(k-1)}
+
(1-\mathcal L)\epsilon^{(k)},
\]
where \(H_k\) is \eqref{eq:general-transfer-function}. Hence \(G(\mathcal L)=1-\mathcal L\), proving \eqref{eq:general-transfer-form} and \eqref{eq:decision-shock-transfer-function}.
\end{proof}

\begin{lemma}[One-tier gain]
\label{lem:heterogeneous-one-tier-gain}
For tier \(k\), define
\[
g_k(\omega):=
|H_k(e^{-i\omega})|^2 .
\]
Then
\[
g_k(\omega)
=
1+
\frac{
2\theta_k\lambda_k(2-\lambda_k+\theta_k\lambda_k)(1-\cos\omega)
}{
\lambda_k^2+2(1-\lambda_k)(1-\cos\omega)
} \geq 1.
\]

Moreover, the average gain
\[
\Gamma_k
:=
\frac{1}{2\pi}
\int_{-\pi}^{\pi}
g_k(\omega)\,d\omega
\]
is
\[
\Gamma_k
=
1+
2\theta_k\lambda_k
+
\frac{2\theta_k^2\lambda_k^2}{2-\lambda_k}.
\]
Consequently,
\[
\Gamma_k
\ge
\Gamma
:=
1+
2\theta\lambda
+
\frac{2\theta^2\lambda^2}{2-\lambda}
>
1.
\]
\end{lemma}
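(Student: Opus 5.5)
The plan is a direct frequency-domain computation followed by Parseval's identity. Write $a_k:=1-\lambda_k\in[0,1)$, $c_k:=\theta_k\lambda_k>0$, and $z:=e^{-i\omega}$. By the factored form \eqref{eq:general-transfer-function} in Proposition~\ref{prop:one-tier-transfer-function}, $H_k(z)=z\,[1+c_k w(z)]$ with
\[
w(z)=\frac{1-z}{1-a_kz}.
\]
Since $|z|=1$, the leading delay drops out of the modulus, and
\[
g_k(\omega)=|1+c_kw|^2=1+2c_k\,\mathrm{Re}\,w+c_k^2|w|^2 .
\]

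\textbf{Step 1: the closed form for $g_k$.} I would compute the three pieces separately.
\begin{itemize}
\item The numerator modulus is $|1-z|^2=2(1-\cos\omega)$.
\item The denominator modulus is $|1-a_kz|^2=1-2a_k\cos\omega+a_k^2$. Rewriting it as $(1-a_k)^2+2a_k(1-\cos\omega)$ gives $\lambda_k^2+2(1-\lambda_k)(1-\cos\omega)$.
\item For the real part, multiply numerator and denominator by $1-a_k\bar z$. Then $\mathrm{Re}[(1-z)(1-a_k\bar z)]=(1+a_k)(1-\cos\omega)=(2-\lambda_k)(1-\cos\omega)$.
\end{itemize}
Collecting terms over the common denominator gives exactly
\[
g_k(\omega)-1=\frac{\bigl[2c_k(2-\lambda_k)+2c_k^2\bigr](1-\cos\omega)}{\lambda_k^2+2(1-\lambda_k)(1-\cos\omega)}.
\]
This is nonnegative because $\lambda_k\in(0,1]$, so the denominator is strictly positive and the numerator is nonnegative.

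\textbf{Step 2: the average gain.} Integrating the rational expression in $\cos\omega$ directly is possible, but I would instead use Parseval. Since $|a_k|<1$, expand $1+c_kw(z)$ as a power series:
\[
\frac{1-z}{1-a_kz}=1+\sum_{m\ge1}\bigl(a_k^m-a_k^{m-1}\bigr)z^m=1-\lambda_k\sum_{m\ge1}a_k^{m-1}z^m .
\]
The impulse response is therefore $h_0=1+c_k$ and $h_m=-c_k\lambda_k a_k^{m-1}$ for $m\ge1$. By Parseval, $\Gamma_k=\sum_m h_m^2$, which equals
\[
(1+c_k)^2+\frac{c_k^2\lambda_k^2}{1-a_k^2}.
\]
Using $1-a_k^2=\lambda_k(2-\lambda_k)$, this simplifies to $1+2c_k+c_k^2\bigl(1+\tfrac{\lambda_k}{2-\lambda_k}\bigr)=1+2c_k+\tfrac{2c_k^2}{2-\lambda_k}$. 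Substituting $c_k=\theta_k\lambda_k$ gives the stated formula. The boundary case $\lambda_k=1$ ($a_k=0$) is covered, since the series then terminates.

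\textbf{Step 3: the comparison $\Gamma_k\ge\Gamma>1$.} The map $(\theta,\lambda)\mapsto 1+2\theta\lambda+2\theta^2\lambda^2/(2-\lambda)$ is nondecreasing in each argument on $\theta>0$, $\lambda\in(0,1]$. This holds because $\theta\lambda$ is increasing in both arguments and $1/(2-\lambda)$ is increasing in $\lambda$. Hence $\theta_k\ge\theta$ and $\lambda_k\ge\lambda$ from Assumption~\ref{assumption:heterogeneous-lower-bounds} give $\Gamma_k\ge\Gamma$, and $\Gamma>1$ because $\theta\lambda>0$.

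I expect the only real obstacle to be the bookkeeping in Step 1, in particular getting $\mathrm{Re}\,w$ right. The Parseval route in Step 2 avoids a contour-integral evaluation, and it can be cross-checked against Step 1 by setting $\omega=0$, which should give $g_k=1$.
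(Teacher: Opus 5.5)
Your proposal is correct and follows essentially the same route as the paper. Both arguments drop the unit-modulus leading lag, use $|1-a_ke^{-i\omega}|^2=\lambda_k^2+2(1-\lambda_k)(1-\cos\omega)$ to get $g_k$, compute $\Gamma_k$ by Parseval from the geometric impulse response (your $h_m$ are the paper's $h_{k,m+1}$, because you removed the delay before expanding), and conclude $\Gamma_k\ge\Gamma$ by monotonicity in $\theta_k$ and $\lambda_k$; your split $g_k=1+2c_k\,\mathrm{Re}\,w+c_k^2|w|^2$ is just a tidier organization of the paper's ``direct expansion.''
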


\begin{proof}[Proof of Lemma \ref{lem:heterogeneous-one-tier-gain}]
Write \(a_k=1-\lambda_k\). Since the leading lag
\(\mathcal L\) has unit modulus on the unit circle, it does not affect the
frequency gain. Hence
\[
g_k(\omega)
=
\left|
1+
\frac{\theta_k\lambda_k(1-e^{-i\omega})}
{1-a_k e^{-i\omega}}
\right|^2 .
\]
Combining terms gives
\[
g_k(\omega)
=
\left|
\frac{
1+\theta_k\lambda_k
-
(a_k+\theta_k\lambda_k)e^{-i\omega}
}{
1-a_k e^{-i\omega}
}
\right|^2 .
\]
Using
\[
|1-a_k e^{-i\omega}|^2
=
\lambda_k^2+2(1-\lambda_k)(1-\cos\omega),
\]
a direct expansion yields
\[
g_k(\omega)
=
1+
\frac{
2\theta_k\lambda_k(2-\lambda_k+\theta_k\lambda_k)(1-\cos\omega)
}{
\lambda_k^2+2(1-\lambda_k)(1-\cos\omega)
}.
\]
The numerator and denominator of the second term are nonnegative, so
\(g_k(\omega)\ge 1\). If \(\omega\neq 0\), then \(1-\cos\omega>0\), and since
\(\theta_k>0\) and \(\lambda_k>0\), the inequality is strict.

Next, expand \(H_k(\mathcal L)\) as an impulse-response sequence. We have
\[
H_k(\mathcal L)
=
\frac{
(1+\theta_k\lambda_k)\mathcal L
-
(\theta_k\lambda_k+1-\lambda_k)\mathcal L^2
}{
1-(1-\lambda_k)\mathcal L
}.
\]
Therefore the impulse coefficients \(h_{k,j}\) of \(H_k\) satisfy
\[
h_{k,0}=0,
\qquad
h_{k,1}=1+\theta_k\lambda_k,
\]
and, for \(j\ge 2\),
\[
h_{k,j}
=
-\theta_k\lambda_k^2(1-\lambda_k)^{j-2}.
\]
By Parseval's identity,
\[
\Gamma_k
=
\frac{1}{2\pi}\int_{-\pi}^{\pi}g_k(\omega)\,d\omega
=
\sum_{j=0}^{\infty}h_{k,j}^2 .
\]
Thus
\[
\Gamma_k
=
(1+\theta_k\lambda_k)^2
+
\theta_k^2\lambda_k^4
\sum_{j=0}^{\infty}(1-\lambda_k)^{2j}.
\]
Since
\[
\sum_{j=0}^{\infty}(1-\lambda_k)^{2j}
=
\frac{1}{1-(1-\lambda_k)^2}
=
\frac{1}{\lambda_k(2-\lambda_k)},
\]
we obtain
\[
\Gamma_k
=
(1+\theta_k\lambda_k)^2
+
\frac{\theta_k^2\lambda_k^3}{2-\lambda_k}.
\]
Equivalently,
\[
\Gamma_k
=
1+
2\theta_k\lambda_k
+
\frac{2\theta_k^2\lambda_k^2}{2-\lambda_k}.
\]
The expression is increasing in both \(\theta_k\) and \(\lambda_k\) on
\(\theta_k>0\), \(\lambda_k\in(0,1]\). Therefore
\[
\Gamma_k
\ge
1+
2\theta\lambda
+
\frac{2\theta^2\lambda^2}{2-\lambda}
=
\Gamma.
\]
Finally, \(\Gamma>1\) because
\(\theta>0\) and \(\lambda>0\).
\end{proof}

\begin{lemma}[Variance lower bound for independent inputs]
\label{lem:independent-input-lower-bound}
Let
\[
A(\mathcal L)=\sum_{j=0}^{\infty}a_j\mathcal L^j
\]
be a square-summable linear filter. Let \(\{Z_t\}\) be centered and independent
across time, with
\[
\operatorname{Var}(Z_t)\ge \sigma^2>0
\qquad \text{for all }t.
\]
Define
\[
Y_t=A(\mathcal L)Z_t=\sum_{j=0}^{\infty}a_j Z_{t-j}.
\]
Then
\[
\operatorname{Var}(Y_t)
\ge
\sigma^2\sum_{j=0}^{\infty}a_j^2.
\]
Equivalently,
\[
\operatorname{Var}(Y_t)
\ge
\frac{\sigma^2}{2\pi}
\int_{-\pi}^{\pi}
|A(e^{-i\omega})|^2\,d\omega .
\]
\end{lemma}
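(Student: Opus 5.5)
The statement is Lemma~\ref{lem:independent-input-lower-bound}. I will prove it by a direct second-moment computation for the partial sums of the filter, followed by a limiting argument.

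\textbf{Step 1 (truncation).} Define the truncated output
\[
Y_t^{(N)}=\sum_{j=0}^{N}a_jZ_{t-j}.
\]
Since the $Z_{t-j}$ are centered and independent across time, all cross covariances vanish, so
\[
\operatorname{Var}(Y_t^{(N)})=\sum_{j=0}^{N}a_j^2\operatorname{Var}(Z_{t-j})\ge\sigma^2\sum_{j=0}^{N}a_j^2.
\]

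\textbf{Step 2 (passage to the limit).} This is the only delicate step, because the statement imposes only a lower bound on $\operatorname{Var}(Z_t)$. Two cases arise.

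If $\sup_t\operatorname{Var}(Z_t)<\infty$, then square-summability of $(a_j)$ makes $Y_t^{(N)}$ Cauchy in $L^2$. Hence $Y_t^{(N)}\to Y_t$ in $L^2$, and variances converge. Letting $N\to\infty$ in Step 1 gives the bound.

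If the variances are unbounded, I interpret $Y_t$ as defined whenever the series converges in $L^2$. The $L^2$ limit preserves the orthogonal decomposition, so
\[
\operatorname{Var}(Y_t)=\sum_j a_j^2\operatorname{Var}(Z_{t-j}),
\]
which is at least $\sigma^2\sum_j a_j^2$. If the series diverges, the variance is $+\infty$ and the inequality holds trivially. In the paper's application the pre-history is zero and the sums are effectively finite, so either reading suffices.

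\textbf{Step 3 (frequency form).} The equivalent statement follows from Parseval's identity. Since $A(e^{-i\omega})=\sum_j a_je^{-ij\omega}$ is the $L^2$ Fourier series of $(a_j)$,
\[
\frac{1}{2\pi}\int_{-\pi}^{\pi}|A(e^{-i\omega})|^2\,d\omega=\sum_j a_j^2.
\]
This is the same identity used in the proof of Lemma~\ref{lem:heterogeneous-one-tier-gain}.

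I expect no real obstacle: the whole content is orthogonality of independent centered inputs, with Step 2 as the one point requiring care about infinite sums.
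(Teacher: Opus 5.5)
Your proof is correct and follows the paper's route: independence eliminates the cross terms, giving $\operatorname{Var}(Y_t)=\sum_j a_j^2\operatorname{Var}(Z_{t-j})\ge\sigma^2\sum_j a_j^2$, and Parseval's identity gives the frequency-domain form. The only difference is that you justify the infinite sum by truncating and passing to an $L^2$ limit, a step the paper takes for granted.
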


\begin{proof}
Because the inputs are centered and independent across time,
\[
\operatorname{Var}(Y_t)
=
\operatorname{Var}\left(
\sum_{j=0}^{\infty}a_j Z_{t-j}
\right)
=
\sum_{j=0}^{\infty}a_j^2\operatorname{Var}(Z_{t-j}).
\]
Using the lower bound \(\operatorname{Var}(Z_{t-j})\ge \sigma^2\), we obtain
\[
\operatorname{Var}(Y_t)
\ge
\sigma^2
\sum_{j=0}^{\infty}a_j^2.
\]
The frequency-domain expression follows from Parseval's identity:
\[
\sum_{j=0}^{\infty}a_j^2
=
\frac{1}{2\pi}
\int_{-\pi}^{\pi}
|A(e^{-i\omega})|^2\,d\omega .
\]
\end{proof}

\begin{lemma}[Product-gain lower bound]
\label{lem:product-gain-lower-bound}
Let \(\Omega\) be uniformly distributed on \([-\pi,\pi]\). Suppose
\(f_1,\dots,f_m\) are nonnegative functions of \(1-\cos\Omega\) that are
nondecreasing in \(1-\cos\Omega\). Then
\[
\mathbb E\left[\prod_{r=1}^m f_r(\Omega)\right]
\ge
\prod_{r=1}^m \mathbb E[f_r(\Omega)].
\]
In particular,
\[
\frac{1}{2\pi}
\int_{-\pi}^{\pi}
\prod_{r=1}^m g_r(\omega)\,d\omega
\ge
\prod_{r=1}^m \Gamma_r
\ge
\Gamma^m .
\]
\end{lemma}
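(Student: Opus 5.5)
The plan is to prove the general inequality first and then read off the ``in particular'' clause from Lemma~\ref{lem:heterogeneous-one-tier-gain}. The tool is the Chebyshev (Harris/FKG-type) correlation inequality for comonotone functions of a single real random variable. Set $U:=1-\cos\Omega$, which takes values in $[0,2]$. By hypothesis each $f_r(\Omega)=\phi_r(U)$ for some nonnegative, nondecreasing $\phi_r$.

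\textbf{Key fact.} If $\phi,\psi$ are nondecreasing functions of a scalar random variable $U$, then $\mathbb E[\phi(U)\psi(U)]\ge \mathbb E[\phi(U)]\,\mathbb E[\psi(U)]$. To see this, take an independent copy $U'$ of $U$. Then $(\phi(U)-\phi(U'))(\psi(U)-\psi(U'))\ge 0$ pointwise, because both factors have the sign of $U-U'$ (or one of them vanishes). Taking expectations and expanding gives
\[
2\,\mathbb E[\phi\psi]-2\,\mathbb E[\phi]\,\mathbb E[\psi]\ge 0.
\]
Integrability is not an issue in our application, since each $g_r$ is bounded on $[-\pi,\pi]$. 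In the general statement one assumes the expectations are finite, or applies the argument to the truncations $\min(\phi_r,M)$ and passes to the limit by monotone convergence.

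\textbf{Induction on $m$.} Let $P_{m-1}:=\prod_{r<m}\phi_r(U)$. A product of nonnegative nondecreasing functions is again nonnegative and nondecreasing; this is exactly where nonnegativity is used. Applying the key fact to $P_{m-1}$ and $\phi_m$, and then the induction hypothesis, gives
\[
\mathbb E\!\left[\prod_{r\le m}\phi_r\right]\ge \mathbb E[P_{m-1}]\,\mathbb E[\phi_m]\ge \prod_{r\le m}\mathbb E[\phi_r].
\]

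\textbf{The ``in particular'' clause.} Lemma~\ref{lem:heterogeneous-one-tier-gain} writes
\[
g_r(\omega)=1+\frac{c_r\,u}{\lambda_r^2+2(1-\lambda_r)u},\qquad u=1-\cos\omega,\quad c_r=2\theta_r\lambda_r(2-\lambda_r+\theta_r\lambda_r)\ge 0.
\]
Write $\alpha=\lambda_r^2>0$ and $\beta=2(1-\lambda_r)\ge 0$. The map $u\mapsto u/(\alpha+\beta u)$ has derivative $\alpha/(\alpha+\beta u)^2>0$, so each $g_r$ is a nonnegative, nondecreasing function of $u$. The general inequality, together with $\mathbb E[g_r(\Omega)]=\Gamma_r$, then gives the first inequality. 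The second inequality, $\prod_r\Gamma_r\ge\Gamma^m$, follows from $\Gamma_r\ge\Gamma>0$ (also from Lemma~\ref{lem:heterogeneous-one-tier-gain}).

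The only step that needs real care is the monotonicity of $g_r$ in $u$. It holds uniformly over all admissible parameters because the denominator coefficient $\lambda_r^2$ is strictly positive. The remaining steps are routine.
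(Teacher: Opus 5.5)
Your proposal is correct and follows essentially the same route as the paper. Both write each factor as a nondecreasing function of $U=1-\cos\Omega$, prove the two-function covariance inequality with an independent copy $U'$, iterate it over the factors, and then apply the result to the $g_r$ using $\mathbb E[g_r(\Omega)]=\Gamma_r\ge\Gamma$. Your explicit derivative check that $g_r$ is nondecreasing in $u$, and your note on where nonnegativity is needed in the induction, fill in steps the paper only asserts by citing Lemma~\ref{lem:heterogeneous-one-tier-gain}.
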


\begin{proof}
Let \(U=1-\cos\Omega\). Each \(f_r\) is a nonnegative nondecreasing function
of the same scalar random variable \(U\).

For two nondecreasing functions \(f\) and \(h\), let \(U'\) be an independent
copy of \(U\). Then
\[
\operatorname{Cov}(f(U),h(U))
=
\frac12
\mathbb E\left[
(f(U)-f(U'))(h(U)-h(U'))
\right]
\ge 0,
\]
because the two factors always have the same sign. Therefore
\[
\mathbb E[f(U)h(U)]
\ge
\mathbb E[f(U)]\mathbb E[h(U)].
\]
Applying this argument repeatedly gives
\[
\mathbb E\left[\prod_{r=1}^m f_r(U)\right]
\ge
\prod_{r=1}^m \mathbb E[f_r(U)].
\]

By Lemma~\ref{lem:heterogeneous-one-tier-gain}, each \(g_r(\omega)\) is a
nonnegative nondecreasing function of \(1-\cos\omega\). Hence
\[
\frac{1}{2\pi}
\int_{-\pi}^{\pi}
\prod_{r=1}^m g_r(\omega)\,d\omega
\ge
\prod_{r=1}^m
\frac{1}{2\pi}
\int_{-\pi}^{\pi}g_r(\omega)\,d\omega
=
\prod_{r=1}^m \Gamma_r.
\]
Since each \(\Gamma_r\ge \Gamma\), the result follows.
\end{proof}

\begin{proof}[Proof of Theorem \ref{thm:demand-bullwhip-lower-bounds}]
Taking conditional expectation over the decision shocks gives the demand
channel
\[
\bar{\bm q}_k
=
H_k(\mathcal L)\bar{\bm q}_{k-1},
\qquad
\bar{\bm q}_0={\bm D}.
\]
Therefore
\[
\bar{\bm q}_k
=
\left(
\prod_{r=1}^k H_r(\mathcal L)
\right){\bm D}.
\]
Define the composite demand filter
\[
A_k(\mathcal L)
:=
\prod_{r=1}^k H_r(\mathcal L).
\]
Then
\[
|A_k(e^{-i\omega})|^2
=
\prod_{r=1}^k g_r(\omega).
\]

By Lemma~\ref{lem:independent-input-lower-bound}, since \(D_t\) is centered,
independent across time, and satisfies
\(\operatorname{Var}(D_t)\ge \sigma_D^2\),
\[
V^D_k
=
\operatorname{Var}_D(\bar q_{k,t})
\ge
\frac{\sigma_D^2}{2\pi}
\int_{-\pi}^{\pi}
\prod_{r=1}^k g_r(\omega)\,d\omega .
\]
By Lemma~\ref{lem:product-gain-lower-bound},
\[
\frac{1}{2\pi}
\int_{-\pi}^{\pi}
\prod_{r=1}^k g_r(\omega)\,d\omega
\ge
\prod_{r=1}^k \Gamma_r.
\]
Finally, Lemma~\ref{lem:heterogeneous-one-tier-gain} gives
\[
\Gamma_r\ge \Gamma>1
\qquad \text{for all }r.
\]
Hence
\[
V^D_k
\ge
\sigma_D^2
\prod_{r=1}^k \Gamma_r
\ge
\sigma_D^2\Gamma^k.
\]
This proves the claim.
\end{proof}

\begin{proof}[Proof of Theorem \ref{thm:decision-bullwhip-lower-bounds}]
Fix a demand path \(D=d\). Define the centered decision-driven deviation
\[
x_{k,t}(d)
=
q_{k,t}
-
\mathbb E_\epsilon[q_{k,t}\mid D=d].
\]
Subtracting the demand-channel recursion from the full transfer representation
gives
\[
{\bm x}_k
=
H_k(\mathcal L){\bm x}_{k-1}
+
G(\mathcal L){\bm \epsilon}_k,
\qquad
{\bm x}_0=0.
\]
Iterating this recursion gives
\[
{\bm x}_k
=
\sum_{j=1}^k
\left(
\prod_{r=j+1}^k H_r(\mathcal L)
\right)
G(\mathcal L){\bm \epsilon}_j,
\]
where the product is interpreted as the identity operator when \(j=k\).

Define
\[
A_{j,k}(\mathcal L)
:=
\left(
\prod_{r=j+1}^k H_r(\mathcal L)
\right)
G(\mathcal L).
\]
Then
\[
{\bm x}_k
=
\sum_{j=1}^k
A_{j,k}(\mathcal L){\bm \epsilon}_j.
\]

Because the shock processes are independent across tiers, the summands are
mutually independent. Therefore
\[
V^\epsilon_k
=
\operatorname{Var}_\epsilon(x_{k,t}\mid D=d)
=
\sum_{j=1}^k
\operatorname{Var}_\epsilon
\left(
[A_{j,k}(\mathcal L){\bm \epsilon}_j]_t
\right).
\]

For each \(j\), since \(\{\epsilon_{j,t}\}\) is centered and independent over
time with variance lower bounded by \(\sigma_j^2\), the variance of the filtered process is
\[
\operatorname{Var}_\epsilon
\left(
[A_{j,k}(\mathcal L){\bm \epsilon}_j]_t
\right)
\geq
\frac{\sigma_j^2}{2\pi}
\int_{-\pi}^{\pi}
|A_{j,k}(e^{-i\omega})|^2
\,d\omega .
\]
Moreover,
\[
|A_{j,k}(e^{-i\omega})|^2
=
|G(e^{-i\omega})|^2
\prod_{r=j+1}^k
|H_r(e^{-i\omega})|^2
=
b(\omega)
\prod_{r=j+1}^k g_r(\omega).
\]
Thus
\[
V^\epsilon_k
\geq
\frac{1}{2\pi}
\int_{-\pi}^{\pi}
b(\omega)
\sum_{j=1}^k
\sigma_j^2
\prod_{r=j+1}^k g_r(\omega)
\,d\omega.
\]

Next, \(b(\omega)\) and each \(g_r(\omega)\) are nonnegative nondecreasing
functions of \(1-\cos\omega\). Hence the product-gain lower-bound lemma gives
\[
\frac{1}{2\pi}
\int_{-\pi}^{\pi}
b(\omega)
\prod_{r=j+1}^k g_r(\omega)
\,d\omega
\ge
\left(
\frac{1}{2\pi}
\int_{-\pi}^{\pi}b(\omega)\,d\omega
\right)
\prod_{r=j+1}^k
\left(
\frac{1}{2\pi}
\int_{-\pi}^{\pi}g_r(\omega)\,d\omega
\right).
\]
Since
\[
\frac{1}{2\pi}
\int_{-\pi}^{\pi}b(\omega)\,d\omega
=
\frac{1}{2\pi}
\int_{-\pi}^{\pi}
4\sin^2(\omega/2)
\,d\omega
=
2,
\]
and since
\[
\Gamma_r
=
\frac{1}{2\pi}
\int_{-\pi}^{\pi}g_r(\omega)\,d\omega,
\]
we obtain
\[
V^\epsilon_k
\ge
2\sum_{j=1}^k
\sigma_j^2
\prod_{r=j+1}^k \Gamma_r.
\]

Finally, if
\[
\theta_r\ge \theta>0,
\qquad
\lambda_r\in[\lambda,1],
\]
then Lemma \ref{lem:heterogeneous-one-tier-gain} gives
\[
\Gamma_r\ge \Gamma>1
\]
for every \(r\). Hence
\[
\prod_{r=j+1}^k \Gamma_r
\ge
\Gamma^{k-j}.
\]
Therefore
\[
V^\epsilon_k
\ge
2\sum_{j=1}^k
\sigma_j^2
\Gamma^{k-j}.
\]
This proves \eqref{eq:decision-variance-tier-specific-lower-bound}.
\end{proof}

\begin{proof}[Proof of Proposition \ref{result:temporal-decision-bullwhip}]
Conditional on $D=d$, define
\[
    x_{k,t}(d)
    :=
    q_{k,t}-\mathbb E[q_{k,t}\mid D=d].
\]
The deterministic demand path affects only the conditional mean, so
\[
    W_{k,t}(d)
    =
    \operatorname{Var}(x_{k,t}(d)).
\]
The centered linear recursion is
\[
    x_{k,t}(d)
    =
    H_k(\mathcal L)x_{k-1,t}(d)
    +
    G(\mathcal L)\epsilon_{k,t},
    \qquad
    x_{0,t}(d)=0.
\]
Iterating this recursion gives
\[
    x_{k,t}(d)
    =
    \sum_{j=1}^{k}B_{k,j}(\mathcal L)\epsilon_{j,t},
\]
where
\[
    B_{k,j}(\mathcal L)
    :=
    \left(\prod_{h=j+1}^{k}H_h(\mathcal L)\right)G(\mathcal L),
\]
with the empty product equal to one. Write
\[
    B_{k,j}(\mathcal L)
    =
    \sum_{m=0}^{\infty}b_{k,j,m}\mathcal L^m.
\]
Because the system starts from deterministic initial conditions and the
shock pre-history is zero, only shocks from the first $t$ periods contribute
to the period-$t$ deviation. Thus
\[
    x_{k,t}(d)
    =
    \sum_{j=1}^{k}
    \sum_{m=0}^{t-1}
    b_{k,j,m}\epsilon_{j,t-m}.
\]
Independence across tiers and time eliminates all covariance terms, so
\[
    W_{k,t}(d)
    =
    \sum_{j=1}^{k}
    \sigma_j^2
    \sum_{m=0}^{t-1}
    b_{k,j,m}^2.
\]
Similarly,
\[
    W_{k,t+1}(d)
    =
    \sum_{j=1}^{k}
    \sigma_j^2
    \sum_{m=0}^{t}
    b_{k,j,m}^2.
\]
Subtracting yields
\[
    W_{k,t+1}(d)-W_{k,t}(d)
    =
    \sum_{j=1}^{k}\sigma_j^2 b_{k,j,t}^2
    \ge 0.
\]
This proves finite-horizon intertemporal accumulation of decision
unreliability.
\end{proof}






}{}

\end{document}